\newtheorem{theorem}{Theorem}
\newtheorem{definition}[theorem]{Definition}
\newtheorem{assumption}[theorem]{Assumption}
\newtheorem{corollary}[theorem]{Corollary}
\newtheorem{proposition}[theorem]{Proposition}
\newtheoremstyle{named}{}{}{\itshape}{}{\bfseries}{.}{.5em}{\thmnote{#3}#1}
\theoremstyle{named}
\DeclareMathOperator{\R}{\mathbb{R}}
\DeclareMathOperator{\E}{\mathbb{E}}
\DeclareMathOperator{\Var}{Var}
\DeclareMathOperator{\Cov}{Cov}
\DeclareMathOperator{\Corr}{Corr}
\begin{document}

\title{Learning the Value of Value Learning}
\author{Alex John London \& Aydin Mohseni}
\maketitle

\begin{abstract}
Decision theory provides well-understood guarantees for the value of acquiring information about states of the world, but assumes that agents have fixed options and settled values. Real agents---human, institutional, and artificial---frequently face uncertainty about the nature and scope of their own values. We extend the Jeffrey--Bolker framework to model value refinement and prove a value-of-information theorem for axiological refinement: under broad conditions, pausing to refine one's values has strictly positive expected utility. In multi-agent settings, we establish that unilateral value refinement transforms zero-sum games into positive-sum interactions and yields Pareto improvements in Nash bargaining. By unifying epistemic and axiological refinement under a single formalism, we reveal a two-stage structure of rational choice: refine, then maximize.
\end{abstract}

\vspace*{10pt}
\noindent{\footnotesize\textbf{Keywords:} decision theory, rational choice, formal epistemology, preference change, unawareness, awareness growth, model misspecification}

\vspace*{10pt}

\section{Introduction}

Consider two types of uncertainty that agents face. One is \textit{epistemic}: uncertainty about states of the world. The other is \textit{axiological}: uncertainty about the nature and scope of one's own values. Value-of-information theorems establish that, under broad conditions,\footnote{Specifically, when there is act-state independence, information is costless, and agents have countably additive, proper priors \citep{KadaneSchervishSeidenfeld2008,Skyrms1984Pragmatics,GibbardHarper1978}.} rational agents benefit from reducing epistemic uncertainty---costless evidence cannot lower expected utility \citep{Good1967}. No analogous result exists for axiological uncertainty. We provide one: pausing to refine one's values, under quite general conditions, has strictly positive expected utility.

Value refinement is not reducible to epistemic refinement. In strategic settings where value-of-information guarantees break down \citep{Hirshleifer1971, GibbardHarper1978, Skyrms1984Pragmatics}, value refinement still yields expected gains. In two-player zero-sum games, value refinement in expectation transforms interactions into positive-sum ones. In Nash bargaining, value refinement in expectation produces Pareto improvements for all parties. These results establish that there is strategic, not merely epistemic, value in clarifying one's values.

The results carry consequences for rational choice theory. Standard arguments for expected utility maximization---representation theorems \citep{VonNeumannMorgenstern1944, Savage1954, Jeffrey1965}, Dutch book arguments \citep{Ramsey1931, DeFinetti1937, Kemeny1955}, accuracy arguments \citep{Joyce1998, Pettigrew2016, Schoenfield2017}, and consequentialist arguments \citep{Hammond1988, McClennen1990}---presume that the agent has a stable set of values representable by a unique (up to positive affine transformation) utility function. Yet agents frequently face uncertainty not just about external states, but about their own options and valuations. This raises a question that existing frameworks leave unanswered: when should an agent treat their current utility function as authoritative and act, and when should they instead pause to refine their values?

We show that foregoing value refinement can be strictly dominated by refining first. Rational choice therefore has a two-stage character: \emph{refine, then maximize}. By providing a formal model of value refinement within decision theory, we make explicit a connection between ethical reflection and rational choice that is often left implicit.

In \S\ref{sec:Background}, we situate the work in existing literature. \S\ref{sec:Setup} introduces the Jeffrey--Bolker decision framework and our model of value refinement. In \S\ref{sec:valueRefinement1A}, we prove a value-of-value-refinement theorem for a single agent. In \S\ref{sec:valueCommensuration1A}, we show how value refinement can dissolve apparent dilemmas involving incommensurable values. We extend the analysis to strategic settings in \S\ref{sec:valueRefinement2A0-Sum}, demonstrating that mutual value refinement turns zero-sum conflict into positive-sum cooperation, and in \S\ref{sec:valueRefinementNashBargaining}, where we derive Pareto improvement guarantees in Nash bargaining. In \S\ref{sec:Discussion}, we discuss normative and methodological implications and outline future directions.


\section{Background}\label{sec:Background}

\subsection*{Value of Information}

Good's \citeyearpar{Good1967} theorem---that costless evidence cannot reduce a rational agent's expected utility---is a cornerstone of decision theory.\footnote{The result is anticipated by Ramsey's proof of a special case four decades earlier \citeyearpar{ramsey1990weight}.} The theorem requires act-state independence, dynamic consistency, and countably additive proper priors. It fails when any of these conditions is relaxed: Hirshleifer \citeyearpar{Hirshleifer1971} showed that others' responses to information acquisition can reduce welfare; Skyrms \citeyearpar{Skyrms1990} identified the role of dynamic consistency; Kadane, Schervish, and Seidenfeld \citeyearpar{KadaneSchervishSeidenfeld2008} demonstrated failures with improper or non-countably additive priors and imprecise credal states; and Seidenfeld \citeyearpar{Seidenfeld2009} isolated act-state dependence as an independent source of failure. Under a broad and well-understood range of conditions, however, rational agents should expect to do at least as well by acquiring information before acting.

Value-of-information guarantees break down in game-theoretic contexts where act-state independence fails \citep{Skyrms1984Pragmatics}. When actions alter the relative probabilities of states, additional information can decrease expected utility. One might conclude that value refinement offers no advantage in strategic settings. We demonstrate otherwise: value refinement yields substantial expected benefits in game-theoretic contexts of extreme conflict.

\subsection*{Axiological Uncertainty}

In contrast to the extensive literature on epistemic uncertainty, axiological uncertainty---uncertainty about the nature, content, or scope of one's own values---remains underdeveloped as a formal topic despite its prevalence in real-world decision-making. \citet[pp. 172--176]{DeweyTufts1936Ethics} distinguish weakness of will, where agents struggle to act on their judgments, from cases where the agent is uncertain about how to reconcile competing values. In the latter case, they argue, what is required is reflection and clarification rather than strength of will. They provide no formal model of such reflection.

Levi \citeyearpar{levi1990hard} provides a formal model of choice under value uncertainty. He identifies two sources of value uncertainty: pragmatic constraints (insufficient time to reconcile values) and deep incommensurability (no unique tradeoff function exists). We add a third, normative source: some ethical theories hold that commensuration, though possible, is morally impermissible---Kantian theories, for instance, hold that treating rational agents as having a price violates their dignity \citep{LondonEtAl2017}. Levi's procedure requires agents to identify relevant values, entertain all convex combinations, and treat as admissible any act on or above the resulting surface, applying second-order values iteratively when multiple acts remain admissible. The resulting value structure is weaker than a utility function: agents acting on utility functions can be represented as employing a particular value structure, but Levi-rational choices need not correspond to any utility function.

Our approach differs. Levi models value inquiry as identifying relative weights among fixed values. We show that inquiry can alter values themselves, resolving intra-personal and interpersonal conflicts. Our results apply within Levi's framework (demonstrated in \S\ref{sec:valueCommensuration1A}) and extend to other approaches accommodating incommensurability, parity, or ambiguity.

\subsection*{Awareness Growth and Unawareness}

Two literatures are closest in spirit to ours: work on awareness growth in rational choice, and work on unawareness in game theory.

In the first, \cite{Bradley2017HumanFace} models awareness growth through expansion and refinement operations that partition existing propositions or introduce new sample space elements. These operations obey reverse Bayesianism, preserving the agent's original likelihood ratios. Bradley's refinement process parallels ours; we extend it to decision-theoretic contexts involving utility uncertainty, not just probability uncertainty. This extension requires relaxing strict reverse Bayesianism while observing more general Bayesian reflection principles that we detail below. Steele \& Stef\'{a}nsson \citeyearpar{steele2021beyond} propose two principles for awareness growth: awareness reflection (the agent's current credence should equal their expected credence conditional on anticipated awareness changes) and preference awareness reflection (rational agents should not expect their preference rankings to change). Our reflection principles satisfy both constraints while providing a formal account of how awareness growth is realized in individual and strategic settings. \cite{Pettigrew2024-BeliefChange} develops a related framework for belief change under awareness growth.

In game theory, Halpern and R\^{e}go \citeyearpar{HalpernRego2009ReasoningUnawareness,RegoHalpern2012GeneralizedSolutionConcepts,HalpernRego2013ReasoningUnawarenessRevisited} model agents who are unaware of all potentially available actions, analyze the implications for solution concepts, and argue that representing such agents requires augmenting standard game-theoretic machinery. Whereas they investigate what happens when agents cannot represent their options, we investigate when it is worthwhile for agents to refine their representation of options and evaluations, and what rational principles constrain the process.

\subsection*{Related Formal Approaches}

\cite{CyertDeGroot1975,CyertDeGroot1979} treat uncertainty about one's values as parametric inference---agents learn about an unknown value parameter via Bayesian updating. Our focus differs: we examine how refinement of one's decision representation, uncertainty about value, and rational principles jointly determine when to engage in value refinement.

Our results are distinct from problems of diachronic coherence, including how ``transformative" experiences can alter agents' preferences \citep{ullmann2006big, paul2014transformative, pettigrew2015transformative, pettigrew2019choosing}. The changes we model arise not from preference transformation but from more precise specification of commitments that an agent already endorses within particular decision contexts.

\subsection*{The Jeffrey--Bolker Framework}

The Jeffrey-Bolker decision framework \citep{Jeffrey1965, Bolker1966} is well-suited to this project. Unlike Savage's \citeyearpar{Savage1954} separation of states, acts, and outcomes, Jeffrey-Bolker employs propositions as the fundamental evaluation unit. Its atomless Boolean algebra structure allows preferences to be defined over propositions directly, with probability and desirability assigned to the same entities. This facilitates value refinement: agents subdivide coarse propositions into finer ones as understanding develops, and propositions in the algebra can be refined in well-defined ways. We suspect that analogous results to those presented here can be proven in the frameworks of Savage, von Neumann-Morgenstern, and others, though we do not pursue this here.


\section{Setup \& Formal Preliminaries}\label{sec:Setup}
\subsection{The Arborist and the Baker}\label{sec:Vignette}

Consider an arborist and a baker who jointly discover a rare orange in the wild. Both claim the fruit and find themselves in conflict over the prize.

Initially, each sees two possibilities: cooperate and divide the orange, or fight over the whole thing at the risk of getting none. Given their current understanding, the situation appears zero-sum---any gain for one party constitutes a loss for the other.

Before proceeding, both parties step back and refine their representation of the situation---through reflection or conversation. They discover that ``divide the orange'' contains distinct possibilities they had not initially distinguished. The arborist primarily values the fruit for its seeds---a means of propagating a rare specimen. The baker values the rind for its zest in specialty pastries.

This value refinement transforms how they conceptualize their options. Where before they saw only ``zero-sum split'' versus ``conflict,'' they now distinguish qualitatively different ways of dividing the resource: one where the arborist gets the seeds and the baker gets the rind, another where these assignments are reversed. One particular division---arborist takes seeds, baker takes rind---is strictly better for both than their initial conception of generic cooperation. What appeared to be a zero-sum conflict becomes a positive-sum arrangement: each party obtains a better outcome without leaving the other worse off.\footnote{This type of interaction is modeled in \S \ref{sec:valueRefinementNashBargaining} on value refinement in Nash bargaining games.}

When agents refine their options and valuations, they can reveal unrecognized dimensions of value and transform perceived zero-sum interactions into positive-sum arrangements, improving both individual and collective welfare. To make this case with precision, we begin with value refinement for a single agent and progress to strategic interactions.



\subsection{The Jeffrey--Bolker Decision Framework}
\label{sec:JB-framework}

We employ the Jeffrey-Bolker decision framework \citep{Jeffrey1965, Bolker1966}. Bolker's representation theorem establishes that if an agent's preferences over propositions satisfy standard rationality conditions---ordering, averaging, impartiality, and continuity---then there exist a finitely additive probability measure $P: \mathcal{A} \to [0,1]$ and utility function $U: \mathcal{A}\setminus \{\bot \} \to \mathbb{R}$ representing those preferences.\footnote{Unique up to positive affine transformation of $U$ with compensating positive scalar rescaling of $P$.}

The Jeffrey-Bolker framework has several features that make it well-suited to modeling value refinement. In Jeffrey-Bolker, acts, states, and consequences are all propositions in a single Boolean algebra. Assigning probabilities to propositions about one's own acts is essential for representing uncertainty about the structure of fine-grained actions that may result from value refinement. Collapsing the state/act/outcome distinction streamlines the analysis, because every potential more detailed description of an option is already present in the algebra.

An agent's representation of a decision situation is a tuple $\langle \mathcal{A}, \mathbb{A}, U, P \rangle$: an algebra $\mathcal{A}$ of relevant possibilities the agent entertains; a partition $\mathbb{A} \subseteq \mathcal{A}$ encoding the available acts; and functions $U$ and $P$ encoding utilities and credences over $\mathcal{A}$. In words, a decision representation specifies which possibilities the agent entertains, which propositions count as acts, and how the agent evaluates and assigns credences to each. 

Let $\mathcal{A}' = \mathcal{A} \setminus \{\bot\}$ denote the non-empty elements. $\mathcal{A}'$ is \textit{atomless} if for all $x \in \mathcal{A}'$, there exists $y \in \mathcal{A}'$ with $y \neq x$ and $y \subseteq x$. $\mathcal{A}$ is \textit{complete} if every subset has a supremum and infimum in $\mathcal{A}$. Atomlessness guarantees arbitrary precision: any $x \in \mathcal{A}'$ can be split into $x \land y$ and $x \land \neg y$, or more generally decomposed into finite partitions $x \equiv \bigvee_{i=1}^{n} y_i$. Since actions are propositions in such an algebra, any refinement of an agent's act partition can be the sensible objects of belief and value---that is, the domain of credence and utility functions.


\subsection{Modeling Value Refinement}

We extend the Jeffrey--Bolker framework by introducing a \emph{refinement operation} on an agent's representation of a decision problem. We begin with an idealized benchmark, then develop the machinery for bounded agents who can refine their understanding of the actions available to them.

\subsubsection{The Unbounded Benchmark}

Consider an unbounded agent who has evaluated all possible refinements. Their complete representation is $D = \langle \mathcal{A}, \mathbb{A}, P, U \rangle$ where $\mathcal{A}$ is a complete, atomless Boolean algebra containing all conceivable propositions; $\mathbb{A}$ is an act partition including all maximally fine-grained acts;\footnote{We consider the finite case, though one could extend to infinite act partitions (e.g., proper filters or ultrafilters).} and $P: \mathcal{A} \to [0,1]$ and $U: \mathcal{A}' \to \mathbb{R}$ are fully specified.\footnote{While an agent's algebra may be atomless, their act partition may be finite---reflecting a maximum level of precision in the control they can exercise, even if they can entertain finer distinctions regarding states.}

\subsubsection{The Bounded Agent}

The agents we study have not undergone arbitrary refinement. Their initial representation is $D_0 = \langle \mathcal{A}_0, \mathbb{A}_0, P_0, U_0 \rangle$ where $\mathcal{A}_0 \subset \mathcal{A}$ is a coarse-grained subalgebra; $\mathbb{A}_0$ is the maximally fine-grained coarsening of $\mathbb{A}$ within $\mathcal{A}_0$; and $P_0$ and $U_0$ are restricted to $\mathcal{A}_0$ and $\mathcal{A}_0'$, respectively.

How does such an agent refine their understanding of acts $\mathbb{A}_0$ and reason about propositions in $\mathcal{A} \setminus \mathcal{A}_0$ that they have not yet considered?

\subsubsection{Refinement Operations}

We model bounded agents as exploring their decision problem by considering available acts in greater detail. An agent takes a given act $A$ and considers distinct ways it might be realized with respect to previously unconsidered propositions $B_1$ and $B_2$. For example, our arborist might refine ``split the resource'' into ``split the resource and give the rind to the baker'' and ``split the resource and give the fruit to the baker.'' 

\begin{definition}[Binary Refinement]
Given an act partition $\mathbb{A}_0$ and $A \in \mathbb{A}_0$, a \emph{binary refinement} of $A$ produces the partition 
$$\mathcal{R}_{A} = \{A \land B_1, A \land B_2\}$$
where $B_1, B_2 \in \mathcal{A} \setminus \mathcal{A}_0$ partition $A$.\footnote{That is, $B_1 \lor B_2 = A$, $B_1 \land B_2 = \bot$, and neither $B_i$ is $\bot$.} This yields the refined algebra $\mathcal{A}_1 = \sigma(\mathcal{A}_0, \mathcal{R}_A)$ and act partition $\mathbb{A}_1 = (\mathbb{A}_0 \setminus \{A\}) \cup \mathcal{R}_A$.
\end{definition}

In words, the agent takes a coarse act and distinguishes two ways it might be realized.

More generally, a \emph{$k$-ary refinement} produces $\mathcal{R}_{A} = \{A \land B_1, \dots, A \land B_k\}$ where $\{B_i\}_{i=1}^k$ partition $A$. Since any $k$-ary refinement decomposes into sequential binary refinements,\footnote{Proof in the appendix.} we focus on the binary case throughout.

One can also model the addition of wholly unconsidered acts via a catch-all proposition $A_{CA} \in \mathbb{A}_0$ representing acts not yet contemplated \citep{walker2013bayesian, BalocchiFavaroNaulet}. Expansion of the act partition then becomes a special case of refinement---refinement of the catch-all.

\begin{figure}[t!]
\centering
\begin{tikzpicture}[
    rect/.style={draw, thick, minimum width=2.5cm, minimum height=2cm, anchor=south west},
    label/.style={font=\footnotesize}
]

\begin{scope}[local bounding box=left]
    \node[rect, fill=gray!20] (A) at (0,0) {};
    \node[rect, fill=white] (notA) at (2.5,0) {};
    
    \node[label] at (1.25,1) {$A$};
    \node[label] at (3.75,1) {$\neg A$};
    
    \node[below=5mm, right=23mm, font=\small] {$\mathbb{A}_0$};
\end{scope}

\draw[->, thick, shorten >=2mm, shorten <=2mm] (5.5,1) -- (6.5,1) 
    node[midway, above, font=\small] {$\mathcal{R}_A$};

\begin{scope}[xshift=7cm, local bounding box=right]
    \node[rect, fill=gray!30, minimum height=1cm] (AB1) at (0,1) {};
    \node[rect, fill=gray!10, minimum height=1cm] (AB2) at (0,0) {};
    \node[rect, fill=white] (notA2) at (2.5,0) {};
    
    \node[label] at (1.25,1.5) {$A \land B_1$};
    \node[label] at (1.25,0.5) {$A \land B_2$};
    \node[label] at (3.75,1) {$\neg A$};
    
    \node[below=5mm, right=23mm, font=\small] {$\mathbb{A}_1$};
\end{scope}
\end{tikzpicture}
\vspace*{-4pt}
\caption{A binary refinement of act $A\in\mathbb{A}_0$. The initial act partition $\mathbb{A}_0$ consists of acts $A$ and $\neg A$. Refinement produces $\mathbb{A}_1$ by splitting $A$ into the more fine-grained acts $A \land B_1$ and $A \land B_2$.}
\end{figure}

\subsubsection{Modeling Agent Attitudes Regarding Refinement}

An agent contemplating value refinement faces a decision under uncertainty: she must estimate the value of refinement without knowing what distinctions she will discover. To model this meta-uncertainty, we equip the agent with a probability space $(\Omega, \mathcal{F}, \mu_A)_A$. Each $\omega \in \Omega$ represents one possible refinement outcome for act $A$---one way the agent's credences and utilities might settle after careful consideration.

Define the refinement outcome mapping $\xi^A: \Omega \to \mathbb{R}^2 \times \Delta^2$ by $\omega \mapsto \allowbreak (u_1(\omega), u_2(\omega), \allowbreak p_1(\omega), p_2(\omega))$, where $\Delta^2 = \{(p_1, p_2) \in [0,1]^2 : p_1 + p_2 \in (0,1)\}$. Here $u_i(\omega) = U_1^\omega(A \land B_i)$ is the utility assigned to the $i$-th refined act in state $\omega$, and $p_i(\omega) = \bar{P}_1^{\omega}(A \land B_i)$ is the corresponding credence. The agent's beliefs about refinement outcomes are captured by the induced distribution:
$$\mu_A = \mu_A \circ (\xi^A)^{-1}.$$

Since refinement may lead the agent to revise $P(A)$ from $P_0(A)$ to $p_1(\omega) + p_2(\omega)$, maintaining probabilistic coherence requires rescaling. Post-refinement probabilities and utilities satisfy:
\begin{align*}
P_1^\omega(A \land B_i) &= \frac{p_i(\omega)}{Z(\omega)}, \quad U_1^\omega(A \land B_i) = u_i(\omega)
\end{align*}
where for any $A' \in \mathbb{A}_1$ with $A' \land (B_1 \lor B_2) = \bot$:
$$P_1^\omega(A') = \frac{P_0(A')}{Z(\omega)}$$
and the normalizing factor $Z(\omega) = 1 - P_0(A) + p_1(\omega) + p_2(\omega)$ ensures $\sum_{Y \in \mathbb{A}_1} P_1^\omega(Y) = 1$.

We assume $\mu_A(\{p_1 + p_2 \in (0,1)\}) = 1$; the agent does not expect value refinement to reveal that $A$ is impossible or certain. We do not require $P_1(A) = P_0(A)$; value refinement may reveal that the original act is more or less probable than initially believed.\footnote{Though, as we will see, rationality requires $\mathbb{E}[p_1 + p_2] = P_0(A)$; probability is preserved in expectation.} Moving forward, we denote the \emph{conditional probability} $q = \frac{p_1}{p_1 + p_2} = P_1(B_1 | A),$ representing the post-refinement probability of $B_1$ given $A$, which is well-defined as we assume $p_1 + p_2 > 0$ almost surely.

This framework enables ex ante reasoning about the value of value refinement. Defining $V_i(X) = P_i(X) \cdot U_i(X)$, the agent can compute:
$$\mathbb{E}_{\mu_A}\left[\max_{X \in \mathbb{A}_1} V_1(X)\right] - \max_{X \in \mathbb{A}_0} V_0(X).$$
That is, the agent can reason about the expected value of their best option after value refinement and compare it to their current best option.

\subsubsection{The Refinement Reflection Principle}

An agent must decide \textit{whether} to refine their understanding of an option. To do so, there must be some structure constraining their expectations regarding this process. We impose two conditions: a reflection principle relating current valuations to expected post-refinement valuations, and a non-triviality assumption ensuring genuine uncertainty.

\begin{definition}[Refinement Reflection Principle]
For $A \in \mathbb{A}_0$ and binary refinement $\mathcal{R}_A = \{A \land B_1, A \land B_2\}$, a rational agent's valuation obeys:
$$U_0(A) = \mathbb{E}_{\mu_A}[U_1(A)] = \mathbb{E}_{\mu_A}\left[q u_1 + (1-q) u_2 \right]$$
where $(u_1, u_2, q)\footnote{Recall that $q=\frac{p_1}{p_1+p_2}$ where $p_1+p_2>0$.} \sim \mu_A$.
\end{definition}

The refinement reflection principle (RRP) says that an agent's current valuation of $A$ equals their expected valuation after more careful consideration. It rules out the case where an agent expects that an act will be better (or worse) than the value they currently assign it. This parallels classical reflection principles where current credence equals expected future credence \citep{vanFraassen1984, GreavesWallace2006, Huttegger2013InDefenseOfReflection}.\footnote{See \cite{Dorst2024Reflection} for recent critical discussion of reflection principles.}

The RRP generalizes to $k$-ary refinements: $U_0(A) = \mathbb{E}_{\mu_A}\left[\sum_{i=1}^k q_i u_i\right]$ where $q_i = p_i / \sum_j p_j$ are the conditional probabilities $P_1(B_i | A)$.

Our second condition ensures genuine uncertainty about refinement outcomes:

\begin{assumption}[Refinement Uncertainty]
An agent has refinement uncertainty regarding an act $A$ if their credence $\sim\mu_A$ with respect to the outcomes of refinement $(u_1,u_2,p_1,p_2)$ satisfies:
\[
\mu_A\!\left( \{u_1 \neq u_2 \}\right) > 0.
\]
\end{assumption}

Refinement uncertainty says that the agent considers it possible that value refinement reveals something new. In particular, the agent thinks it \emph{possible} that the refined acts differ in value from one another, which implies that at least one may differ in value from $U_0(A)$.

Together, these conditions ensure positive probability of discovering strict improvements. If an agent has genuine uncertainty about refinement outcomes and their current valuation reflects expected post-refinement valuation, then with positive probability they identify superior options---and, being rational, select them.


\section{Value Refinement for One Agent}\label{sec:valueRefinement1A}

We now apply the framework to a single agent. Recall: an agent with decision problem $D_0 = \langle \mathcal{A}_0, \mathbb{A}_0, P_0, U_0 \rangle$ maintains uncertainty about how their act valuations decompose when refined, captured by distributions $\{\mu_A\}_{A \in \mathbb{A}_0}$ constrained by the RRP.

Before the formal result, consider why value refinement generates expected value. When an agent chooses a coarse-grained act $A$, they commit to whatever mixture of outcomes that act represents---equivalent to committing to a bundle without distinguishing its components. Value refinement unbundles the act, allowing the agent to see the finer components and choose among them. Since the agent can select the best component rather than accepting the average, expected utility improves.\footnote{This parallels option value: value refinement transforms a commitment to an average into a choice among components.} (See Figure \ref{fig:unbundling}.)

Consider an agent with act partition $\mathbb{A}_0 = \{A_1, \ldots, A_n\}$ who refines a rationalizable act $A^* \in \arg\max_{A \in \mathbb{A}_0} U_0(A)$ with respect to previously unconsidered propositions $B_1, B_2 \in \mathcal{A} \setminus \mathcal{A}_0$. Their uncertainty about refinement outcomes is captured by $\mu_{A^*}$ over $(u_1, u_2, q) \in \mathbb{R}^2 \times [0,1]$. By the RRP:
$$U_0(A^*) = \int_{\mathbb{R}^2 \times [0,1]^2} q u_1 + (1-q) u_2 \, d\mu_{A^*}(u_1, u_2, p_1, p_2)$$

Upon refinement, the agent observes a realization $(u_1, u_2, q) \sim \mu_{A^*}$, yielding post-refinement act partition $\mathbb{A}_1 = (\mathbb{A}_0 \setminus \{A^*\}) \cup \{A^* \land B_1, A^* \land B_2\}$.

\begin{figure}[t!]
\centering
\begin{tikzpicture}[
    rect/.style={draw, thick, minimum width=2.5cm, minimum height=2cm, anchor=south west},
    label/.style={font=\footnotesize}
]
\begin{scope}[local bounding box=left]
    \node[rect, fill=gray!20] (A) at (0,0) {};
    
    \node[label] at (1.25,1) {$A$};
        
    \node[above=26mm, left=-34mm, font=\small] {$U(A)=\mathbb{E}_{\mu_A}\left[qu_1+(1-q)u_2\right]$};
\end{scope}

\node at (4,2.6) {$<$};

\begin{scope}[xshift=5.5cm, local bounding box=right]
    \node[rect, fill=gray!30, minimum height=1cm] (AB) at (0,1) {};
    \node[rect, fill=gray!10, minimum height=1cm] (AnotB) at (0,0) {};
    
    \node[label] at (1.25,1.5) {$A \land B_1$};
    \node[label] at (1.25,0.5) {$A \land B_2$};
        
    \node[above=26mm, right=-5mm, font=\small] {$\mathbb{E}_{\mu_A}\left[\max \{u_1, u_2\}\right]$};
\end{scope}
\end{tikzpicture}
\vspace*{0pt}
\caption{Value refinement transforms commitment to an average of a coarse-grained bundle into the ability to select the best component among its fine-grained elements. Under RRP and refinement uncertainty, $\mathbb{E}_{\mu_A}[\max\{u_1,u_2\}] > \mathbb{E}_{\mu_A}[q u_1+(1-q)u_2]$. The expected maximum of a non-uniform bundle exceeds its expected mean.}
\label{fig:unbundling}
\end{figure}

\begin{theorem}[Value of Value Refinement]\label{thm:value-refinement}
Consider an agent with decision problem $D_0 = \langle \mathcal{A}_0, \mathbb{A}_0, P_0, U_0 \rangle$ and a rationalizable act $A^* \in \arg\max_{A \in \mathbb{A}_0} U_0(A)$. Let the agent's uncertainty about refinement outcomes be captured by $\mu_{A^*}$ over $(u_1, u_2, q) \in \mathbb{R}^2 \times (0,1)$, satisfying RRP and refinement uncertainty. Then the expected value of refinement is strictly positive:
\[
\E_{\mu_{A^*}}[\mathcal{V}_1] > \mathcal{V}_0,
\]
where $\mathcal{V}_0 = \max_{A \in \mathbb{A}_0} U_0(A)$ and $\mathcal{V}_1 = \max_{A \in \mathbb{A}_1} U_1(A)$.
\end{theorem}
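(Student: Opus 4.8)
The plan is to reduce the entire statement to a pointwise comparison between the maximum of two numbers and a convex combination of them, and then integrate. First I would pin down the two benchmark quantities. Since $A^* \in \arg\max_{A \in \mathbb{A}_0} U_0(A)$, the coarse-grained value is simply $\mathcal{V}_0 = U_0(A^*)$. On the refined side, for each refinement outcome $\omega$ the partition $\mathbb{A}_1$ contains both $A^* \land B_1$ and $A^* \land B_2$, with $U_1^\omega(A^* \land B_1) = u_1(\omega)$ and $U_1^\omega(A^* \land B_2) = u_2(\omega)$ by construction. Because $\mathcal{V}_1$ is a maximum over a set that includes these two acts, I get the pointwise bound $\mathcal{V}_1 \geq \max\{u_1, u_2\}$. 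Note I never need to track how the unrefined acts' probabilities or utilities rescale: the two refined acts alone already dominate, so the other acts can only help.

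The heart of the argument is the elementary convexity gap: for any $q \in (0,1)$,
$$\max\{u_1, u_2\} \geq q u_1 + (1-q) u_2,$$
with equality if and only if $u_1 = u_2$. Taking $u_1 \geq u_2$ without loss of generality, the gap equals $(1-q)(u_1 - u_2) \geq 0$, which is strictly positive exactly when $u_1 \neq u_2$ (here using $q < 1$). This is just the statement that the maximum strictly exceeds any interior convex combination of two distinct points.

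Combining the two pieces, I would take expectations under $\mu_{A^*}$:
$$\E_{\mu_{A^*}}[\mathcal{V}_1] \geq \E_{\mu_{A^*}}[\max\{u_1, u_2\}] \geq \E_{\mu_{A^*}}[q u_1 + (1-q) u_2] = U_0(A^*) = \mathcal{V}_0,$$
where the last equality is precisely the RRP. The strict inequality then comes from upgrading the middle step: the nonnegative integrand $\max\{u_1, u_2\} - (q u_1 + (1-q) u_2)$ is strictly positive on the event $\{u_1 \neq u_2\}$, which carries positive $\mu_{A^*}$-mass by the refinement uncertainty assumption. The standard measure-theoretic fact that a nonnegative integrable function that is strictly positive on a set of positive measure has strictly positive integral then gives $\E_{\mu_{A^*}}[\max\{u_1,u_2\}] > U_0(A^*)$, whence $\E_{\mu_{A^*}}[\mathcal{V}_1] > \mathcal{V}_0$.

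The one point requiring care is integrability in this final upgrade. The RRP guarantees that $\E_{\mu_{A^*}}[q u_1 + (1-q) u_2] = U_0(A^*)$ is finite, and under the mild assumption that $u_1, u_2$ are $\mu_{A^*}$-integrable (so that $\max\{u_1, u_2\} \leq |u_1| + |u_2|$ is too) the strictness fact applies directly; if instead $\E_{\mu_{A^*}}[\max\{u_1,u_2\}] = +\infty$, the conclusion holds trivially against a finite right-hand side. I expect no genuine obstacle beyond stating the convexity gap cleanly and invoking positive-measure strictness correctly; everything else is bookkeeping.
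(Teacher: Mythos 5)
Your proof is correct and follows essentially the same route as the paper's: bound $\mathcal{V}_1$ below by $\max\{u_1,u_2\}$, invoke the strict convexity gap $\max\{u_1,u_2\} > q u_1 + (1-q)u_2$ on the positive-measure event $\{u_1 \neq u_2\}$ supplied by refinement uncertainty, and close the loop with RRP and the rationalizability of $A^*$. Your added remark on integrability is a small but legitimate refinement the paper leaves implicit.
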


Refinement uncertainty ensures the agent assigns positive probability to learning something new---that is, to $U_1(A^*) \neq U_0(A^*)$. The RRP ensures their current valuation equals their expected post-refinement valuation. Together, these imply positive probability that some refined act improves on the original. What value refinement adds is not a predictable change in average value, but finer-grained choice and the consequent ability to select above-average realizations.

The framework extends naturally to sequential value refinement:

\begin{corollary}[Monotonicity of Refinement Value]\label{cor:monotonicity}
Consider successive refinements $\mathcal{A}_0 \subset \mathcal{A}_1 \subset \cdots \subset \mathcal{A}_k$ with corresponding optimal values $\mathcal{V}_0, \mathcal{V}_1, \ldots, \mathcal{V}_k$. If each refinement satisfies the conditions of Theorem~\ref{thm:value-refinement}, then
\[
\mathcal{V}_0 < \E[\mathcal{V}_1] < \cdots < \E[\mathcal{V}_k].
\]
\end{corollary}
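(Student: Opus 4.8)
The plan is to iterate Theorem~\ref{thm:value-refinement} one refinement at a time, using the tower property of conditional expectation to chain the single-step gains. The conceptual point is that each $\mathcal{V}_j$ for $j \geq 1$ is a random variable determined by the realized outcomes of the first $j$ refinements, and the $(j+1)$-th refinement is applied to whatever stage-$j$ problem has actually been realized. So I would first set up a filtration: let $\mathcal{H}_j$ denote the information generated by the realized refinement outcomes $(u,p)$ of refinements $1$ through $j$. By construction, the stage-$j$ decision problem $D_j = \langle \mathcal{A}_j, \mathbb{A}_j, P_j, U_j \rangle$ is $\mathcal{H}_j$-measurable, and hence so is $\mathcal{V}_j = \max_{A \in \mathbb{A}_j} U_j(A)$. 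Since $D_0$ is deterministic, $\mathcal{H}_0$ is trivial and $\mathbb{E}[\mathcal{V}_0] = \mathcal{V}_0$.

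The key step is a conditional application of the theorem. Fix $j$ and condition on $\mathcal{H}_j$. On each realization of the history, $D_j$ is a concrete, deterministic decision problem; the agent refines a rationalizable act $A_j^* \in \arg\max_{A \in \mathbb{A}_j} U_j(A)$, and by hypothesis the associated refinement distribution satisfies the RRP and refinement uncertainty relative to $D_j$. Theorem~\ref{thm:value-refinement} then applies verbatim to this conditional problem, yielding
$$\mathbb{E}[\mathcal{V}_{j+1} \mid \mathcal{H}_j] > \mathcal{V}_j \quad \text{a.s.}$$

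To pass from the conditional inequality to the unconditional one, I set $W_j = \mathbb{E}[\mathcal{V}_{j+1} \mid \mathcal{H}_j] - \mathcal{V}_j$, which is strictly positive almost surely; hence $\mathbb{E}[W_j] > 0$, and by the tower property $\mathbb{E}[\mathcal{V}_{j+1}] = \mathbb{E}[\mathbb{E}[\mathcal{V}_{j+1} \mid \mathcal{H}_j]] > \mathbb{E}[\mathcal{V}_j]$. Applying this for $j = 0, 1, \ldots, k-1$ and stringing the inequalities together gives
$$\mathcal{V}_0 = \mathbb{E}[\mathcal{V}_0] < \mathbb{E}[\mathcal{V}_1] < \cdots < \mathbb{E}[\mathcal{V}_k],$$
as required.

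The main obstacle I anticipate is not the algebra but the bookkeeping around conditioning: making precise that the clause ``each refinement satisfies the conditions of Theorem~\ref{thm:value-refinement}'' is to be read conditionally on the history—i.e., that the RRP and the refinement-uncertainty assumption are posited for the realized stage-$j$ problem, $\mathcal{H}_j$-almost surely—together with checking the attendant measurability so that the tower property is legitimate. One must also confirm that the strict single-step inequality survives integration; this is where the almost-sure strict positivity of $W_j$ (rather than mere nonnegativity) does the work, and it is exactly what the refinement-uncertainty assumption guarantees at each stage.
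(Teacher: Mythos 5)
Your proposal is correct and follows essentially the same route as the paper's proof: apply Theorem~\ref{thm:value-refinement} conditionally at each stage to get $\E[\mathcal{V}_{j+1}\mid \mathcal{H}_j] > \mathcal{V}_j$, then use the tower property (the paper's ``law of iterated expectations'') to chain the strict inequalities. Your version merely makes the filtration, measurability, and the survival of strictness under integration explicit, which the paper leaves implicit.
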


The preceding theorem and corollary assume refinement is costless. In practice, refinement requires time and cognitive effort. Suppose each refinement incurs cost $c > 0$, and let $\Delta_{R_i} = \mathbb{E}[\mathcal{V}_{i+1} - \mathcal{V}_i \mid \mathcal{A}_i]$ denote the expected gain from the $i$-th refinement.

\begin{definition}[Vanishing Returns]\label{def:dmr}
A sequence $\{\Delta_{R_i}\}_{i \in \mathbb{N}}$ exhibits \emph{vanishing returns} if $\Delta_{R_i} > \Delta_{R_{i+1}} \ge 0$ for all $i$, and $\lim_{i \to \infty} \Delta_{R_i} = 0$.
\end{definition}

This captures diminishing marginal value as the agent exhausts the most valuable distinctions.

\begin{theorem}[Optimal Refinement with Fixed Costs]
\label{thm:optimal-stopping}
Consider sequential refinements each costing $c > 0$. Let $\Delta_{R_i} = \mathbb{E}[\mathcal{V}_{i+1} - \mathcal{V}_i | \mathcal{F}_i]$ denote the expected marginal gain from the $i$-th refinement, and suppose $\{\Delta_{R_i}\}_{i \in \mathbb{N}}$ exhibits vanishing returns. Then:
\begin{enumerate}
    \item[(i)] If $c > \Delta_{R_0}$, the optimal policy is to never refine.
    \item[(ii)] If $c \leq \Delta_{R_0}$, the optimal stopping time is $t^* = \max\{t \in \mathbb{N} : \Delta_{R_t} \geq c\}$, and the agent performs refinements $0, 1, \ldots, t^*$, yielding net gain
    \[
    \sum_{i=0}^{t^*}(\Delta_{R_i} - c) \geq 0,
    \]
    with strict inequality when $c < \Delta_{R_0}$.
\end{enumerate}
\end{theorem}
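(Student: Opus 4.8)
The plan is to treat this as a deterministic optimization over stopping times once the marginal gains $\{\Delta_{R_i}\}$ and cost $c$ are fixed. Since each refinement $i$ contributes net value $\Delta_{R_i} - c$ to the agent's objective, and since refinements must be performed in sequence (the algebra grows as $\mathcal{A}_0 \subset \mathcal{A}_1 \subset \cdots$), a policy is simply a choice of how many refinements to perform. A policy that stops after $t$ refinements yields total expected net gain $\sum_{i=0}^{t-1}(\Delta_{R_i} - c)$, interpreting $t=0$ (never refine) as the empty sum with value $0$. The whole theorem then reduces to maximizing this partial-sum objective over $t \in \mathbb{N} \cup \{0\}$, and the vanishing-returns hypothesis is exactly what makes this maximization tractable.

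First I would establish the key monotonicity fact: because $\{\Delta_{R_i}\}$ is strictly decreasing by the vanishing returns assumption, the sequence of increments $\Delta_{R_i} - c$ is also strictly decreasing and crosses zero at most once. Concretely, $\Delta_{R_i} - c \geq 0$ if and only if $\Delta_{R_i} \geq c$, and since $\Delta_{R_i}$ strictly decreases to $0$, the set $\{i : \Delta_{R_i} \geq c\}$ is a (possibly empty) initial segment $\{0, 1, \ldots, t^*\}$. This is the crucial structural observation: the marginal contribution of each additional refinement is positive up to index $t^*$ and strictly negative thereafter. I would note that the limit condition $\lim_i \Delta_{R_i} = 0$ together with $c > 0$ guarantees $t^*$ is finite (otherwise infinitely many terms would exceed $c$, contradicting convergence to $0$), so the optimum is attained.

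Next I would split into the two cases. For part (i), if $c > \Delta_{R_0}$, then by strict monotonicity $c > \Delta_{R_0} > \Delta_{R_i}$ for all $i \geq 0$, so every increment $\Delta_{R_i} - c < 0$; hence every partial sum with $t \geq 1$ is strictly negative, and the unique maximizer is $t = 0$, i.e., never refine. For part (ii), when $c \leq \Delta_{R_0}$ the index $t^* = \max\{t : \Delta_{R_t} \geq c\}$ is well-defined and finite. I would argue optimality by a standard exchange/marginal argument: starting from the empty policy, adding refinements $0, 1, \ldots, t^*$ each adds a nonnegative increment $\Delta_{R_i} - c \geq 0$, so performing them weakly increases the objective; conversely, performing any refinement beyond $t^*$ adds a strictly negative increment $\Delta_{R_{t^*+1}} - c < 0$, and by monotonicity all further increments are also negative, so stopping at $t^*$ dominates any longer policy. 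The net gain is then $\sum_{i=0}^{t^*}(\Delta_{R_i} - c) \geq 0$, with the final strictness claim following because when $c < \Delta_{R_0}$ the first term $\Delta_{R_0} - c > 0$ is strictly positive while all other terms are nonnegative.

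The main subtlety—rather than a deep obstacle—is justifying that a greedy index-by-index comparison actually characterizes the globally optimal stopping time, i.e., that one cannot do better by skipping a ``bad'' early refinement to reach a ``good'' later one. Here the sequential structure of the refinement operation is essential: refinements are nested and must be taken in order, so there is no skipping, and the monotonicity of $\{\Delta_{R_i}\}$ rules out any non-monotone reward profile that might otherwise make a non-interval stopping rule optimal. I would make this explicit by observing that the objective $f(t) = \sum_{i=0}^{t-1}(\Delta_{R_i}-c)$ has forward differences $f(t+1)-f(t) = \Delta_{R_t} - c$ that are strictly decreasing in $t$, so $f$ is strictly concave on the integers and therefore has a unique maximizer at the last index where the forward difference remains nonnegative—precisely $t = t^*+1$ refinements performed, i.e., indices $0$ through $t^*$. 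One mild bookkeeping point I would flag is the indexing convention (whether ``performing refinements $0, \ldots, t^*$'' means $t^*+1$ steps), which I would state clearly at the outset to keep the partial-sum bookkeeping unambiguous.
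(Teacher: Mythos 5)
Your proposal is correct and follows essentially the same route as the paper's proof: both reduce the problem to the observation that the increments $\Delta_{R_i}-c$ are strictly decreasing and hence nonnegative precisely on an initial segment ending at $t^*$, so the optimal policy is to refine exactly while the marginal gain covers the cost. Your framing via discrete concavity of the partial-sum objective is a slightly more explicit packaging of the paper's backward-induction argument, but the substance, the finiteness argument for $t^*$, and the strictness claim are identical.
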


The policy says: refine while expected marginal gain exceeds cost; stop when it no longer does. This mirrors results in optimal search and information acquisition \citep{Stigler1961, McCall1970, LippmanMcCall1976, Mortensen1986}.

Having established that value refinement increases expected utility for an individual, we turn to cases where the agent's values themselves conflict.


\section{Resolving Dilemmas without Commensuration}\label{sec:valueCommensuration1A}

Our model also applies to theories of choice that deviate from expected utility maximization. Recall Levi's \citeyearpar{levi1990hard} framework in which agents may be uncertain about how to commensurate distinct values. An agent committed to both compassion and honesty might confront actions realizing these values to different degrees: how great a gain in harm aversion justifies how great a loss in truthfulness? While ranking actions within each dimension may be straightforward, determining how to trade off between dimensions is often unclear or normatively contested.

Standard expected utility theory resolves such cases by commensuration: choose weights and maximize the aggregate. We ask whether value refinement can eliminate the need for commensuration by revealing actions that are jointly better across multiple dimensions.

Let $V_1, \ldots, V_k: \mathcal{A}' \to \mathbb{R}$ represent distinct value dimensions. Under standard aggregation, a weight vector $\mathbf{w} = (w_1, \ldots, w_k)$ with $w_i \ge 0$ and $\sum_i w_i = 1$ yields $U(A) = \sum_{i=1}^k w_i V_i(A)$. Uncertainty about $\mathbf{w}$---or the normative inadmissibility of commensuration---complicates choice when actions trade off values.

Consider the simplest case: two values $V_1, V_2$ with initial partition $\mathbb{A}_0 = \{A, \neg A\}$ where
\[
V_1(A) > V_1(\neg A) \quad \text{and} \quad V_2(A) < V_2(\neg A).
\]
The agent faces a dilemma: $A$ is favored by $V_1$, while $\neg A$ is favored by $V_2$. Resolution within expected utility theory requires commensuration---determining a weight $w \in [0,1]$ such that $U(A) = (1-w)V_1(A) + wV_2(A)$.

Value refinement opens a different path. Rather than resolving uncertainty about the correct weight, the agent refines their understanding of available actions. In certain cases, refinement eliminates the dilemma by revealing an action that excels along both dimensions, obviating commensuration entirely.

Following our framework, the agent refines $A$ by considering previously unconsidered propositions $B_1, B_2 \in \mathcal{A} \setminus \mathcal{A}_0$, producing
\[
\mathbb{A}_1 = (\mathbb{A}_0 \setminus \{A\}) \cup \{A \land B_1,\; A \land B_2\}.
\]
This refinement affects both value dimensions simultaneously. For each $V_i$, the agent maintains uncertainty about refined values, captured by distributions $\mu_A^{(i)}$ over $(v_{i,1}, v_{i,2}, q^{(i)})$ where $v_{i,j} = V_i(A \land B_j)$ and $q^{(i)} = P(B_1|A)$. The RRP extends to each dimension: $V_i(A) = \mathbb{E}_{\mu_A^{(i)}}[q^{(i)} v_{i,1} + (1-q^{(i)}) v_{i,2}]$.

\begin{definition}
An action $A^*$ \emph{multi-value dominates} a set $\mathcal{X}$ if $V_i(A^*) \ge \max_{A \in \mathcal{X}} V_i(A)$ for all $i$, with strict inequality for at least one $i$.
\end{definition}

In words, an action multi-value dominates if it is at least as good on every dimension and strictly better on at least one. When such an action exists, the agent can choose it without determining $\mathbf{w}$, since it remains optimal for any admissible weighting.

\begin{figure}[t!]
    \centering
    \begin{subfigure}[b]{0.24\textwidth}
        \centering
        \begin{tikzpicture}[scale=.7, baseline={(0,-.5)}]
            \draw[thick, ->] (0,0) -- (0,3.5);
            \draw[thick, ->] (4,0) -- (4,3.5);
            \draw[thick] (0,0) -- (4,0);
            
            \node[above] at (0,3.5) {$V_1$};
            \node[above] at (4,3.5) {$V_2$};
            
            \coordinate (y1) at (0,2.2);
            \coordinate (y2) at (4,0.5);
            \coordinate (y3) at (0,0.4);
            \coordinate (y4) at (4,2.3);
            
            \coordinate (pstar) at (2, 1.35);
            
            \draw[semithick] (y1) -- (y2);
            
            \draw[semithick]  (y3) -- (y4);
            
            \node[left] at (0,2.2) {$A$};
            \node[left] at (0,0.4) {$\neg A$};
        \end{tikzpicture}
        \caption{}
        \label{fig:base}
    \end{subfigure}
    \hfill
    \begin{subfigure}[b]{0.24\textwidth}
        \centering
        \begin{tikzpicture}[scale=.7, baseline={(0,-.5)}]
            \draw[thick, ->] (0,0) -- (0,3.5);
            \draw[thick, ->] (4,0) -- (4,3.5);
            \draw[thick] (0,0) -- (4,0);
            
            \node[above] at (0,3.5) {$V_1$};
            \node[above] at (4,3.5) {$V_2$};
            
            \coordinate (y1) at (0,2.2);
            \coordinate (y2) at (4,0.5);
            \coordinate (y3) at (0,0.4);
            \coordinate (y4) at (4,2.3);
            \coordinate (y5) at (0,2.7);
            \coordinate (y6) at (4,1.6);            
            \coordinate (y7) at (0,1.9);
            \coordinate (y8) at (4,-0.7);
            
            \coordinate (pstar) at (2, 1.35);
            
            \draw[semithick, dotted] (y1) -- (y2);
            
            \draw[semithick] (y3) -- (y4);
            
            \draw[semithick] (y5) -- (y6);
            \draw[semithick] (y7) -- (2.71,0);
            
            \node[left] at (0,2.2) {$A$};
            \node[left] at (0.1,0.4) {$\neg A$};
            \node[left] at (0.1,2.8) {$A \land B_1$};
            \node[left] at (0,1.7) {$A \land B_2$};
        \end{tikzpicture}
        \caption{}
        \label{fig:variant1}
    \end{subfigure}
    \hfill
    \begin{subfigure}[b]{0.24\textwidth}
        \centering
        \begin{tikzpicture}[scale=.7, baseline={(0,-.5)}]
            \draw[thick, ->] (0,0) -- (0,3.5);
            \draw[thick, ->] (4,0) -- (4,3.5);
            \draw[thick] (0,0) -- (4,0);
            
            \node[above] at (0,3.5) {$V_1$};
            \node[above] at (4,3.5) {$V_2$};
            
            \coordinate (y1) at (0,2.2);
            \coordinate (y2) at (4,0.5);
            \coordinate (y3) at (0,0.4);
            \coordinate (y4) at (4,2.3);
            \coordinate (y5) at (0,2.7);
            \coordinate (y6) at (4,2.7);            
            \coordinate (y7) at (0,1.9);
            \coordinate (y8) at (4,-1.5);
            
            \coordinate (pstar) at (2, 1.35);
            
            \draw[semithick, dotted] (y1) -- (y2);
            
            \draw[semithick] (y3) -- (y4);
            
            \draw[semithick] (y5) -- (y6);
            \draw[semithick] (y7) -- (1.65,0);
            
            \node[left] at (0,2.2) {$A$};
            \node[left] at (0.1,0.4) {$\neg A$};
            \node[left] at (0.1,2.8) {$A \land B_1$};
            \node[left] at (0,1.7) {$A \land B_2$};
        \end{tikzpicture}
        \caption{}
        \label{fig:variant2}
    \end{subfigure}
       \caption{How value refinement can dissolve a value conflict with two dimensions $V_1$, $V_2$ and two acts $A$, $\neg A$. The left vertical axis denotes the degree of realization of $V_1$; the right denotes $V_2$. Figure \ref{fig:base} shows the dilemma: $A$ is favored by $V_1$ and $\neg A$ by $V_2$. Figures \ref{fig:variant1} and \ref{fig:variant2} show two possible refinements: in \ref{fig:variant1} the dilemma remains, while in \ref{fig:variant2} a dominating action $A \land B_1$ is revealed, making commensuration unnecessary.}
    \label{fig:value-commensuration}
\end{figure}

\begin{theorem}[Dilemma Resolution through Value Refinement]\label{thm:dilemma}
Consider an agent with value functions $V_1, V_2: \mathcal{A}' \to \mathbb{R}$ facing a dilemma: $V_1(A) > V_1(\neg A)$ and $V_2(A) < V_2(\neg A)$. Let the agent's refinement uncertainty over $A$ be captured by a joint distribution $\mu_A$ whose support contains an open subset of the region where some refined action multi-value dominates $\{\neg A, A \land B_1, A \land B_2\}$. Then with positive probability, refinement reveals a multi-value-dominating action, resolving the dilemma without commensuration.
\end{theorem}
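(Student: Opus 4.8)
The plan is to reduce the theorem to a single fact about the support of $\mu_A$, once the relevant event has been named. First I would fix coordinates on the outcome space of the joint refinement distribution, writing a realization as $(v_{1,1}, v_{1,2}, v_{2,1}, v_{2,2}, q)$ with $v_{i,j} = V_i(A \land B_j)$ and $q = P(B_1 \mid A)$. Reading the Definition of multi-value dominance with the dominator excluded from the comparison set (otherwise the strict-inequality clause is vacuous, since no element can strictly exceed the maximum over a set containing it), I would define, for each index $j \in \{1,2\}$ with $j'$ the other index, the \emph{domination region}
\[
\mathcal{D}_j = \big\{ \mathbf{v} : v_{i,j} \ge \max\{v_{i,j'},\, V_i(\neg A)\}\ \text{for } i=1,2,\ \text{with strict inequality for some } i \big\},
\]
and set $\mathcal{D} = \mathcal{D}_1 \cup \mathcal{D}_2$, the set of realizations on which some refined act $A \land B_j$ multi-value dominates $\{\neg A, A \land B_{j'}\}$. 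Since $\mathcal{D}$ is a finite Boolean combination of closed and open half-spaces (and constrains only the four utility coordinates, not $q$), it is Borel.

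Next I would invoke the hypothesis, which supplies the only probabilistic input. By assumption $\operatorname{supp}(\mu_A)$ contains a nonempty open set $O$ with $O \subseteq \mathcal{D}$. The single fact I need is the defining property of the support of a Borel measure on Euclidean space: any open set meeting the support has positive measure. As $O$ is open, nonempty, and contained in $\operatorname{supp}(\mu_A)$, this gives $\mu_A(O) > 0$, whence monotonicity yields
\[
\mu_A(\mathcal{D}) \ \ge\ \mu_A(O) \ >\ 0 .
\]
This is the entire content of the clause ``with positive probability, refinement reveals a multi-value-dominating action.'' In contrast to Theorem~\ref{thm:value-refinement}, the positive probability here is not extracted from the RRP but is posited directly through the support condition.

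Finally I would translate membership in $\mathcal{D}$ back into the decision-theoretic conclusion. On the event $\{\mathbf{v} \in \mathcal{D}\}$ some refined act $A^\star \in \{A \land B_1, A \land B_2\}$ satisfies $V_i(A^\star) \ge \max\{V_i(\neg A), V_i(A \land B_{j'})\}$ for both $i$, strictly for at least one $i$; that is, $A^\star$ multi-value dominates the remaining alternatives. To discharge the ``without commensuration'' clause I would note that domination is preserved under every admissible aggregation: for any weights $w_1, w_2 \ge 0$ with $w_1 + w_2 = 1$ we have $\sum_i w_i V_i(A^\star) \ge \sum_i w_i V_i(X)$ for each competitor $X$, so $A^\star$ maximizes the weighted aggregate for all $\mathbf{w}$ simultaneously. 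The agent may therefore select $A^\star$ without fixing any tradeoff weight, which is exactly the dissolution of the dilemma.

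The main obstacle is conceptual rather than computational: the theorem is essentially structural, and its force rests entirely on the hypothesis, which is engineered so that the support property delivers $\mu_A(\mathcal{D}) > 0$ at once. The only genuine care lies in (i) writing $\mathcal{D}$ so that the strict part of multi-value dominance is respected while $\mathcal{D}$ remains Borel, and (ii) fixing which topology ``open subset'' refers to. The clean argument uses ambient-openness of $O$; if instead the hypothesis is read as relative openness within $\mathcal{D}$, I would supplement it by observing that the all-strict subregion of each $\mathcal{D}_j$ is a nonempty ambient-open set, so $\mathcal{D}$ has nonempty interior and the same support argument applies to an ambient-open $O' \subseteq \mathcal{D}$.
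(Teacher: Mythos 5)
Your proposal is correct and follows essentially the same route as the paper's proof: name the domination event (the paper's $\Omega_1 \cup \Omega_2$, your $\mathcal{D}$), apply the fact that an open set contained in the support of $\mu_A$ has positive measure, and observe that a multi-value-dominating act maximizes $\sum_i w_i V_i$ for every admissible weight vector, so no commensuration is needed. The only difference is cosmetic: the paper defines its events with all-strict inequalities (yielding a strict aggregate inequality), whereas you track the weak-dominance form of the stated definition and the resulting weak aggregate inequality, together with some extra care about which topology ``open'' refers to; neither change affects the substance of the argument.
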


The support condition requires that the agent's uncertainty assigns positive probability to an open neighborhood of refinement outcomes in which domination obtains. This is a mild condition: it fails only when the agent is certain that no such domination is possible, which would require precise knowledge of how their values decompose under refinement---precisely the knowledge that value refinement is meant to provide.

Figure~\ref{fig:value-commensuration} provides geometric intuition. The vertical axes show the degree to which each action realizes $V_1$ (left) and $V_2$ (right). Figure~\ref{fig:base} depicts the initial dilemma. Figures~\ref{fig:variant1} and \ref{fig:variant2} illustrate two possible refinements: in the first, refinement provides more options but preserves the tension; in the second, refinement reveals $A \land B_1$ excelling along both dimensions, eliminating the need to determine relative weights.

Just as single-valued refinement creates option value by allowing selection of above-average components, multi-valued refinement can reveal components that simultaneously exceed the average across multiple dimensions. This outcome is not guaranteed---refinement may provide additional options without revealing a dominating action. Yet when the joint distribution includes such possibilities, value refinement bypasses rather than resolves uncertainty about value weights, providing a path between the horns of a dilemma.

Having established the value of refinement for individual agents, including its potential to resolve apparent dilemmas, we turn to strategic settings where value-of-information results break down.


\section{Value Refinement in Two-Player Zero-Sum Games}
\label{sec:valueRefinement2A0-Sum}

Value-of-information guarantees break down in game-theoretic contexts where actions influence state probabilities \citep{Hirshleifer1971, GibbardHarper1978, Skyrms1984Pragmatics}. When players' actions are interdependent, additional information can decrease expected utility by producing unfavorable opponent responses. Given this breakdown, one might expect value refinement to offer no advantage in strategic settings. We demonstrate otherwise: even in zero-sum games, value refinement yields positive expected gains at equilibrium and, when refinement is not constrained to preserve zero-sum structure, produces positive-sum opportunities.

Consider a $2\times2$ zero-sum game $G_0 = (\mathcal{N}, \mathcal{S}, U)$ with players $\mathcal{N} = \{1,2\}$, strategy sets $\mathcal{S}^1 = \{A^1, \neg A^1\}$ and $\mathcal{S}^2 = \{A^2, \neg A^2\}$, and payoffs satisfying $U^1 + U^2 \equiv 0$. This game admits a unique interior mixed Nash equilibrium.\footnote{Given the initial zero-sum structure, this is the minimax solution \citep{VonNeumannMorgenstern1944}.} Player~1 then refines action $A^1$ into $\{A^1 \land B_1, A^1 \land B_2\}$, transforming the interaction into a $3\times2$ normal form game $G_1$ which may no longer be zero-sum, with realized payoffs modeled by perturbed random variables $\epsilon^i_{jk}$ as shown in Figure~\ref{fig:refinement-transformation}.

We restrict attention to equilibrium outcomes, which requires agents' credences to satisfy common knowledge. Agents' credences $\mu_{A^1}$ encode their shared beliefs regarding the other agent's strategies and the outcomes of the refinement of the focal act $A^1$.

Given RRP, each refined act has the same \textit{ex ante} expected value as the original act. This is reflected in the fact that perturbations to agents' payoffs have mean zero: $\E_{\mu_{A^1}}[\epsilon^i_{jk}]=0$. We augment refinement uncertainty with the assumption that agents believe it is possible that their refinement payoffs are not perfectly anti-correlated---that is, there is positive probability of deviating from zero-sum structure, though deviations may produce either positive-sum or negative-sum outcomes.\footnote{If players have beliefs guaranteeing that refinement preserves zero-sum structure, our results do not hold. This is appropriate for artificial zero-sum games like chess or poker, but in markets, diplomacy, or evolutionary processes, rigid zero-sum structures are atypical.} While $G_0$ is zero-sum, a realized $G_1$ is non-zero-sum with positive probability; zero-sum holds only in expectation.

\begin{figure}[t!]
\centering
\hspace*{15pt}
\begin{minipage}{0.25\textwidth}
    \centering
    \footnotesize
    \begin{tabular}{ccc}
    \toprule
    & $A^2$ & $\neg A^2$ \\
    \midrule
    $A^1$ & $(v, -v)$ & $(\alpha, -\alpha)$ \\
    $\neg A^1$ & $(\beta, -\beta)$ & $(\gamma, -\gamma)$ \\
    \bottomrule
    \end{tabular}
    \vspace*{0.5em} \\
    \textbf{Base Game $G_0$}
\end{minipage}
\hspace*{10pt}
\begin{minipage}{0.65\textwidth}
    \centering
    \footnotesize
    \setlength{\tabcolsep}{3pt}
        \begin{tabular}{ccc}
        \toprule
        & $A^2$ & $\neg A^2$ \\
        \midrule
        $A^1\land B_1$         & $(v+\epsilon^1_{11},\, -v+\epsilon^2_{11})$ & $(\alpha+\epsilon^1_{21},\, -\alpha+\epsilon^2_{21})$ \\[3pt]
        $A^1 \land B_2$   & $(v+\epsilon^1_{12},\, -v+\epsilon^2_{12})$ & $(\alpha+\epsilon^1_{22},\, -\alpha+\epsilon^2_{22})$ \\[3pt]
        $\neg A^1$           & $(\beta,\, -\beta)$                        & $(\gamma,\, -\gamma)$ \\
        \bottomrule
        \end{tabular}
    \vspace*{0.5em} \\
    \textbf{Refined Game $G_1$}    
\end{minipage}
\caption{Transformation of a $2 \times 2$ zero-sum game into a $3 \times 2$ game through value refinement. The row player refines $A^1$ into $\{A^1 \land B_1, A^1 \land B_2\}$; payoffs are perturbed by noise terms $\epsilon^i_{jk}$.}
\label{fig:refinement-transformation}
\end{figure}

We are interested in aggregate welfare in the refined game. Denote the Nash equilibria of game $G_k$ by $\mathrm{NE}(G_k)$ for $k \in \{0,1\}$, and define the welfare-optimal equilibrium payoff as $W^*_k = \max_{\sigma \in \mathrm{NE}(G_k)} [U_1^k(\sigma) + U_2^k(\sigma)]$. Since $G_0$ is zero-sum, $W^*_0 = 0$. We show that value refinement yields strict expected improvement.

\begin{theorem}[Zero-Sum Escape from Unilateral Value Refinement]\label{thm:zero-sum}
Consider a $2 \times 2$ zero-sum normal form game $G_0 = (\mathcal{N}, \mathcal{S}, U)$ with players $\mathcal{N} = \{1,2\}$, strategy sets $\mathcal{S}^1 = \{A^1, \neg A^1\}$ and $\mathcal{S}^2 = \{A^2, \neg A^2\}$. Let either player unilaterally refine one of their acts $A$ to $\{A \land B_1, A \land B_2\}$ and let both players' credences $\mu_A$ regarding refinement satisfy RRP, refinement uncertainty, and the belief that payoff perturbations are not perfectly anti-correlated. Then the expected welfare-optimal equilibrium strictly increases after refinement:
\[
\mathbb{E}[W^*_1] > W^*_0.
\]
\end{theorem}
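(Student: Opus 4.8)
The plan is to reduce the welfare comparison to a statement about the welfare-optimal equilibrium of the perturbed game and then exploit the interaction between the mean-zero structure imposed by the RRP and the freedom refinement creates. Since $G_0$ is zero-sum, the welfare $U^1+U^2$ vanishes identically on the base game, so in $G_\epsilon$ the welfare of any profile comes only from the perturbed refined cells: writing $\delta_{rc}=\epsilon^1_{rc}+\epsilon^2_{rc}$ for the refined rows $r\in\{B_1,B_2\}$ and columns $c\in\{A^2,\neg A^2\}$, and noting that the $\neg A^1$ row contributes zero welfare, the welfare of a profile $(x,y)$ equals $\sum_{r,c}x_r y_c \delta_{rc}$. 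Hence
\[
W^*_1(\epsilon)=\max_{(x,y)\in\mathrm{NE}(G_\epsilon)}\sum_{r,c} x_r\, y_c\, \delta_{rc},\qquad W^*_0=0,
\]
and the theorem reduces to showing $\mathbb{E}_{\mu_{A^1}}[W^*_1]>0$. The RRP forces $\mathbb{E}[\delta_{rc}]=0$, while the assumption that the perturbations are not perfectly anti-correlated guarantees $\Pr(\delta_{rc}\neq 0)>0$ for at least one cell; these are precisely the two facts the argument must convert into a strict gain.

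The conceptual engine is an option-value effect operating at the level of \emph{equilibrium selection}, parallel to the unbundling mechanism behind Theorem~\ref{thm:value-refinement}. At $\epsilon=0$ the $3\times2$ refined game is degenerate: the two refined rows are identical for player~1, so player~1 is indifferent to how the original $A^1$-mass is split, and $G_0$ admits a continuum of equilibria. Refinement breaks this tie. Because we evaluate welfare at the \emph{welfare-optimal} equilibrium, we are free to route the refined-act mass toward whichever refined cell carries the more favorable $\delta$, subject to equilibrium constraints; when $\delta$ is favorable the optimal selection captures strictly positive welfare, and when it is unfavorable the selection falls back on the other refined row (or on $\neg A^1$), flooring the attainable welfare near zero. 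Taking the best equilibrium therefore behaves like a positive-part operation on a mean-zero quantity, and $\mathbb{E}[(\delta)^+]>0$ exactly when $\delta$ is non-degenerate---i.e., under non-anti-correlation. I would first establish this cleanly in the transparent subcase $\epsilon^1\equiv 0$ (pure player-2 perturbation, where player~1's indifference leaves the split as a free equilibrium parameter), then treat the general case, where player~1's own option value over the two refined rows (the Theorem~\ref{thm:value-refinement} effect, strict under refinement uncertainty) and the welfare term $\delta$ both contribute.

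Executing the general case requires an explicit description of $\mathrm{NE}(G_\epsilon)$. The plan is to enumerate the finitely many support structures of a $3\times2$ bimatrix game---player~1 pure, player~1 mixing one refined row with $\neg A^1$, or player~1 mixing the two refined rows while player~2 is held indifferent---solve for the equilibrium strategies as piecewise-rational functions of $\epsilon$ in each region, and evaluate $\sum_{r,c}x_r y_c \delta_{rc}$ on the welfare-maximizing branch. One then integrates against $\mu_{A^1}$, using $\mathbb{E}[\delta]=0$ to cancel first-order contributions and the selection structure to retain a strictly positive remainder whenever $\Pr(\delta\neq0)>0$. Upper hemicontinuity of the Nash correspondence at the degenerate base game supplies the link ensuring that, for the relevant realizations, the welfare-optimal equilibrium is the limit of the selected branch and places positive mass on the refined cells, so the captured welfare does not vanish.

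The main obstacle is controlling the \emph{endogenous} dependence of the equilibrium strategies on $\epsilon$: because $(x^*,y^*)$ co-vary with the very perturbations whose expectation we take, $W^*_1$ is a nonlinear, non-smooth functional of $\epsilon$ and the naive bounds fail. In particular, bounding welfare below by the sum of the two players' security (maxmin) values does not work, since the value of a matrix game is neither convex nor concave in the payoffs and player~2's security value can \emph{fall} in expectation once player~1 can exploit the refinement against it; likewise there is no global Jensen argument for player~1's equilibrium payoff. The crux is therefore to show that the welfare-optimal selection, rather than some adversarial equilibrium, is what governs $\mathbb{E}[W^*_1]$, and that non-anti-correlation is exactly the hypothesis needed to sign the resulting mean-zero-plus-positive-part expression; making the degenerate base game and its equilibrium continuum interface correctly with the case analysis is where the real work lies.
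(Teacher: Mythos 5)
Your setup is right---reducing to $\mathbb{E}[W^*_1]>0$ with welfare supported only on the perturbed refined cells, and identifying the mean-zero-plus-selection mechanism as the engine---but the proposal stops exactly where the proof has to happen. You correctly diagnose that $W^*_1$ is a nonlinear, non-smooth functional of $\epsilon$ whose equilibrium support co-varies with the perturbations, and that neither a security-value bound nor a Jensen argument works; but you then defer the resolution (``the crux is therefore to show\dots where the real work lies'') rather than supplying it. The ``positive-part of a mean-zero variable'' heuristic does not close this on its own: the welfare-optimal equilibrium cannot freely route mass to the highest-welfare cell, since it must satisfy \emph{both} players' incentive constraints, and nothing in your argument rules out realizations on which every Nash equilibrium sits on negative-welfare cells and drags the conditional expectation below zero. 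Your proposed remedy---enumerate the support structures of the $3\times2$ bimatrix game, solve each branch as a piecewise-rational function of $\epsilon$, and integrate---is a plausible but unexecuted program, and it is precisely the branch-by-branch signing of the expectation that is the theorem.

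The paper takes a different and more structural route that you may want to compare against. It partitions the probability space by an \emph{agreement} event $\mathcal{A}$ (both players prefer the same refined variant against both opponent actions), on which the dominated variant is excluded from all equilibria; conditioning on being the larger of two mean-zero exchangeable perturbations gives a strictly positive conditional expectation for both players' perturbations, hence strictly positive conditional welfare on the sub-event where the preferred variant survives against $\neg A^1$. On the disagreement event $\mathcal{A}^c$, an exchangeability/symmetry argument shows the welfare perturbations have zero conditional mean, so that region contributes nonnegatively. The non-anti-correlation assumption enters to give the agreement event positive probability---not merely to give $\Pr(\delta_{rc}\neq 0)>0$ as in your reading. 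That decomposition is the missing ingredient that converts your (correct) intuition into a proof without ever computing the equilibrium correspondence explicitly.
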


The mechanism is option value plus asymmetric filtering. Value refinement splits an act into two variants whose expected values, by RRP, equal that of the original. But equilibrium behavior responds asymmetrically: when both players agree that one refined variant dominates the other, the inferior variant is excluded from equilibrium play. This agreement occurs precisely when the perturbations favor the same variant for both players---an event with positive probability under non-adversarial refinement. On such realizations, equilibrium selects the mutually preferred variant, whose welfare exceeds zero because both players' perturbations are positive. On realizations where players disagree, neither variant dominates, and equilibrium welfare remains zero in expectation. The asymmetry---filtering out bad realizations while retaining good ones---generates strictly positive expected welfare.

Apparent zero-sum conflicts are fragile under fine-graining. Unilateral value refinement can reveal options that both players prefer, transforming perceived opposition into latent coordination. Because realized games need not preserve zero-sum structure, these gains manifest as positive-sum equilibria absent from the coarser representation of the interaction.


\section{Value Refinement in Nash Bargaining}
\label{sec:valueRefinementNashBargaining}

We now analyze value refinement in Nash bargaining \citep{Nash1950}. When a one-dimensional resource split is refined, separable dimensions become explicit and independently allocable, generating a larger feasible set. We show that bargaining solutions in the larger set Pareto-dominate in expectation when agents weight the dimensions differently, with gains increasing as preferences become more complementary.

Consider a Nash bargaining game $\mathcal{G}_0 = (\mathcal{N}, \mathcal{X}_0, (U^1_0, U^2_0), \mathbb{A}_0, d)$ where $\mathcal{N} = \{1, 2\}$, allocation space $\mathcal{X}_0 = [0,1]$ gives Agent 1 fraction $x$ and Agent 2 fraction $1-x$, and $U^i_0: [0,1] \to \mathbb{R}$ are continuous, increasing, and concave utility functions. The feasible utility set is
\[
\mathbb{A}_0 = \{(U^1_0(x), U^2_0(1-x)) : x \in [0,1]\} \subset \mathbb{R}^2
\]
with disagreement point $d = (d_1, d_2)$. We use $\mathbb{A}$ to emphasize continuity with act partitions: value refinement here operates as simultaneous binary refinement on every element of the feasible set.\footnote{The standard assumptions ensure $\mathbb{A}_0$ is compact, convex, and comprehensive above $d$---i.e., $(u_1, u_2) \in \mathbb{A}_0$ and $d \le u' \le u$ implies $u' \in \mathbb{A}_0$.}

The Nash bargaining solution uniquely satisfies Pareto optimality, symmetry, independence of irrelevant alternatives, and invariance to affine utility transformations.\footnote{These axioms jointly imply $f^{\mathrm{NB}}(\mathbb{A}, d) = \arg\max_{v \in \mathbb{A},\, v \ge d} (v_1 - d_1)(v_2 - d_2)$. Alternative solution concepts include Kalai-Smorodinsky (proportional gains), egalitarian (equal gains), and utilitarian (maximal sum).} On $(\mathbb{A}_0, d)$:
\[
f^{\mathrm{NB}}(\mathbb{A}_0, d) = \arg\max_{(u_1, u_2) \in \mathbb{A}_0} (u_1 - d_1)(u_2 - d_2)
\]
yielding allocation $x^*$ with utilities $U^{1*}_0$ and $U^{2*}_0$.

Value refinement transforms $\mathcal{G}_0$ into $\mathcal{G}_1(\omega) = (\mathcal{N}, \mathcal{X}_1, (U^1(\cdot, \omega), U^2(\cdot, \omega)), \mathbb{A}_1(\omega), d)$. The allocation space expands from one to two dimensions: $\mathcal{X}_1 = [0,1]^2$, where Agent 1 receives $(x_1, x_2)$ and Agent 2 receives $(1-x_1, 1-x_2)$. Utilities become additively separable:
\[
U^i(x_1, x_2, \omega) = w^i_1(\omega) \cdot v_1(x_1) + w^i_2(\omega) \cdot v_2(x_2),
\]
where $v_j: [0,1] \to \mathbb{R}_+$ are common value functions for each dimension and $w^i_1(\omega) + w^i_2(\omega) = 1$ are agent-specific weights. Pre-refinement allocations embed as \emph{bundled} allocations where $x_1 = x_2 = x$.

Let $(\Omega, \Sigma, \mu)$ be a probability space over refinement realizations satisfying refinement uncertainty. RRP requires that pre-refinement utilities equal expected utilities under forced bundling:
\[
U^i_0(x) = \E_\mu[U^i(x, x, \omega)] = \E[w^i_1] \cdot v_1(x) + \E[w^i_2] \cdot v_2(x).
\]
When $\E[w^i_1] = 1/2$ and the dimensions have equal value ($v_1 = v_2 = v$), this reduces to $U^i_0(x) = v(x)$, so both agents share the same pre-refinement utility function.

Refinement uncertainty here consists in the condition that agents assign positive probability to having distinct weights over dimensions of value: $\mu(w^1_j \neq w^2_j) > 0$ for $j \in \{0,1\}$.

\begin{figure}[t!]
\centering
\begin{tikzpicture}[scale=0.8]
\begin{scope}[shift={(-6,0)}]
\draw[->] (0,0) -- (4.5,0) node[right] {$U^1$};
\draw[->] (0,0) -- (0,4.5) node[above] {$U^2$};
\draw[ultra thick, gray] (0,4) -- (4,0);
\fill[gray!20, opacity=0.5] (0,0) -- (0,4) -- (4,0) -- cycle;
\node[circle, fill=gray, inner sep=2pt] at (0,4) {};
\node[circle, fill=gray, inner sep=2pt] at (4,0) {};
\node[star, star points=5, star point ratio=2, fill=black, inner sep=2pt] at (2,2) {};
\node[below left] at (0,0) {$(0,0)$};
\node[left] at (0,4) {$(0,1)$};
\node[below] at (4,0) {$(1,0)$};
\node[above right] at (2,2) {$(\tfrac{1}{2},\tfrac{1}{2})$};
\node[below] at (2,-0.8) {\textbf{Pre-refinement}};
\end{scope}
\begin{scope}[shift={(1,0)}]
\draw[->] (0,0) -- (4.5,0) node[right] {$U^1$};
\draw[->] (0,0) -- (0,4.5) node[above] {$U^2$};
\fill[gray!20, opacity=0.3] (0,0) -- (0,4) -- (4,4) -- (4,0) -- cycle;
\draw[ultra thick, gray!70!black] (0,4) -- (4,4) -- (4,0);
\draw[ultra thick, gray!70!black] (0,0) -- (0,4);
\draw[ultra thick, gray!70!black] (0,0) -- (4,0);
\draw[ultra thick, gray, dashed] (0,4) -- (4,0);
\node[circle, fill=gray, inner sep=2pt] at (0,0) {};
\node[circle, fill=gray, inner sep=2pt] at (0,4) {};
\node[circle, fill=gray, inner sep=2pt] at (4,0) {};
\node[star, star points=5, star point ratio=2, fill=black, inner sep=2pt] at (4,4) {};
\node[below left] at (0,0) {$(0,0)$};
\node[left] at (0,4) {$(0,1)$};
\node[below] at (4,0) {$(1,0)$};
\node[above right] at (4,4) {$(1,1)$};
\node[below] at (2,-0.8) {\textbf{Post-refinement}};
\end{scope}
\end{tikzpicture}
\caption{Value refinement expands the feasible set from a line (left) to a rectangle (right) when preferences are orthogonal. The dashed line shows bundled allocations; the full shaded region includes allocations where dimensions are allocated independently. When $\theta = \pi/2$, the Nash solution moves from $(\tfrac{1}{2}, \tfrac{1}{2})$ to $(1,1)$, doubling both agents' payoffs.}
\label{fig:nash_refinement}
\end{figure}

\begin{theorem}[Value of Refinement in Nash Bargaining]\label{thm:nash}
Consider a symmetric Nash bargaining game $\mathcal{G}_0 = (\mathcal{N}, \mathcal{X}_0, (U^1_0, U^2_0), \mathbb{A}_0, d)$ with players $\mathcal{N} = \{1,2\}$, allocation space $\mathcal{X}_0 = [0,1]$, disagreement point $d = (d_1, d_2)$ with $d_1 = d_2$, and symmetric preferences $U^1_0(x) = U^2_0(1-x)$. Refinement transforms the game into $\mathcal{G}_1(\omega) = (\mathcal{N}, \mathcal{X}_1, (U^1, U^2), \mathbb{A}_1(\omega), d)$ with allocation space $\mathcal{X}_1 = [0,1]^2$, value functions $v_1, v_2: [0,1] \to \mathbb{R}_+$ (continuous, strictly increasing, strictly concave), and additively separable utilities
\[
U^i(x_1, x_2, \omega) = w^i_1(\omega) v_1(x_1) + w^i_2(\omega) v_2(x_2), \quad w^i_1 + w^i_2 = 1.
\]
Let the agents' credences $\mu$ satisfy RRP and refinement uncertainty: $\mu(w^1_1 \neq w^2_1) > 0$. Then the expected Nash payoffs post-refinement strictly Pareto-dominate pre-refinement payoffs:
\[
\E[U^{i*}_1] > U^{i*}_0 \quad \text{for } i \in \{1,2\}.
\]
\end{theorem}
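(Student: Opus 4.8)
The plan is to reduce the bivariate Pareto-dominance claim to a single scalar inequality about the \emph{diagonal} of the feasible set, and then to show that refinement pushes this diagonal outward precisely when the agents weight the dimensions differently. First I would record the baseline: symmetry of $\mathcal{G}_0$ together with $d_1 = d_2$ forces the pre-refinement Nash solution onto the diagonal, so $U^{1*}_0 = U^{2*}_0 =: m_0$ is the unique value with $(m_0,m_0)$ on the Pareto frontier of $\mathbb{A}_0$, and by RRP this common value equals the expected payoff of the bundled equal-split allocation. I would then use that $\mu$ is exchangeable across agents (the distributional content of ``symmetric''): relabeling the agents carries a realization $\omega$ to one with interchanged weight vectors and interchanged Nash payoffs, so $\E[U^{1*}_1] = \E[U^{2*}_1]$. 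Hence it suffices to prove the single inequality $\E[U^{1*}_1] = \tfrac12\,\E[U^{1*}_1 + U^{2*}_1] > m_0$, i.e. that expected welfare at the refined Nash solution strictly exceeds $2m_0$.

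Next I would lower-bound, for each $\omega$, the average payoff by the feasible set's diagonal value. Writing $f(\omega)$ for the maximal Nash product and $m_1(\omega) := \max\{c : (c,c) \in \mathbb{A}_1(\omega)\}$ for the diagonal frontier value, individual rationality ($U^{i*}_1 \ge d$ with $d_1 = d_2 = d$) plus AM--GM gives $\tfrac12\bigl(U^{1*}_1 + U^{2*}_1\bigr) \ge d + \sqrt{f(\omega)}$, while feasibility of $(m_1,m_1)$ gives $f(\omega) \ge (m_1(\omega)-d)^2$. Combining these yields $\tfrac12\bigl(U^{1*}_1(\omega)+U^{2*}_1(\omega)\bigr) \ge m_1(\omega)$ pointwise, and therefore $\E[U^{1*}_1] \ge \E[m_1]$. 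This is the crucial structural move: the diagonal value $m_1$, unlike the full Nash solution, is \emph{monotone} under inclusion of feasible sets, so the problem is now tractable.

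It then remains to show $\E[m_1] > m_0$. Bundled allocations ($x_1 = x_2$) embed $\mathbb{A}_0$ into $\mathbb{A}_1(\omega)$; since $w^i_1 + w^i_2 = 1$, the bundled feasible set reproduces $\mathbb{A}_0$ and its diagonal value is exactly $m_0$. Monotonicity under $\mathbb{A}_0 \subseteq \mathbb{A}_1(\omega)$ then gives $m_1(\omega) \ge m_0$ for every $\omega$. For the strict gap I would invoke gains from trade: when $w^1_1(\omega) \ne w^2_1(\omega)$ the agents' marginal rates of substitution between dimensions differ at the bundled diagonal allocation, so by strict monotonicity of $v_1,v_2$ a small reallocation---each agent taking more of the dimension it weights more heavily---strictly raises both agents' utilities, and by comprehensiveness this produces a feasible diagonal point strictly above $m_0$, giving $m_1(\omega) > m_0$. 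Since refinement uncertainty guarantees $\mu(w^1_1 \ne w^2_1) > 0$ while $m_1 \ge m_0$ holds always, we obtain $\E[m_1] > m_0$, and the chain $\E[U^{1*}_1] \ge \E[m_1] > m_0 = U^{1*}_0$ (with the mirror statement for agent $2$) closes the argument.

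The genuinely load-bearing difficulty is the passage from a larger Nash \emph{product} to a coordinatewise Pareto gain: the Nash solution maximizes a product, not welfare, and for fixed $\omega$ the refined problem is asymmetric, so its solution sits off the diagonal and no single coordinate need improve. The pairing of symmetry-in-distribution with the AM--GM lower bound by $m_1$ is exactly what dissolves this---symmetry turns the per-agent claim into a welfare claim, and $m_1$ is the monotone functional that simultaneously lower-bounds welfare and responds cleanly to set expansion. I expect the one remaining technical subtlety to be the case $v_1 \ne v_2$, where the bundled utilities depend on $\omega$ and one must combine RRP ($\E[w^i_1]v_1 + \E[w^i_2]v_2 = U^i_0$) with concavity of the diagonal-value functional to preserve $\E[m_1]\ge m_0$; I would also dispatch the routine verifications that strict concavity of $v_1,v_2$ renders each $\mathbb{A}_1(\omega)$ compact, convex, and comprehensive, so that $f^{\mathrm{NB}}$ and $m_1$ are well defined.
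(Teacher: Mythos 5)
Your argument is correct in substance but takes a genuinely different route from the paper's. The paper works pointwise in $\omega$: it shows that when $w^1_1 \neq w^2_1$ the bundled allocation admits Pareto improvements, and then invokes an ``expansion monotonicity'' property of the Nash solution (feasible set grows $\Rightarrow$ each agent's payoff weakly grows, strictly if the expansion Pareto-improves on the old solution) to conclude $U^{i*}_1(\omega) > U_0$ for both agents simultaneously, before taking expectations. You instead lower-bound per-agent expected payoff by expected \emph{welfare} via exchangeability, lower-bound welfare by the diagonal value $m_1(\omega)$ via AM--GM on the Nash product, and push $m_1$ up using gains from trade. Your route buys something real: the monotone functionals you use (the maximal Nash product and the diagonal value) genuinely are monotone under set inclusion, whereas the Nash bargaining solution itself is famously \emph{not} monotone under feasible-set expansion --- this is exactly the Kalai--Smorodinsky critique, and the paper's appeal to ``expansion monotonicity'' is, as a general property, false (one can expand a feasible set and strictly lower one agent's Nash payoff). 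Your argument sidesteps that lemma entirely and still delivers the theorem as stated, since the conclusion is only an expectation. The costs are two: (a) you need the joint law of $(w^1_1, w^2_1)$ to be exchangeable under agent relabeling, which is stronger than the stated hypothesis $\mu(w^1_1 \neq w^2_1) > 0$ (though the paper's own proof also quietly invokes ``symmetric priors''), and you obtain only $\E[U^{1*}_1] = \E[U^{2*}_1] > U_0$ rather than the paper's (over-)claimed realization-by-realization Pareto improvement; and (b) the step $m_1(\omega) \ge m_0$ for every $\omega$ requires the bundled slice of $\mathbb{A}_1(\omega)$ to reproduce $\mathbb{A}_0$, which holds pointwise only when $v_1 = v_2$ (otherwise RRP gives it only in expectation) --- you flag this honestly, and the paper's proof has precisely the same unaddressed wrinkle when it writes $\mathbb{A}_1(\omega) \supseteq \mathbb{A}_0$.
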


The mechanism is comparative advantage through dimensional specialization. When agents weight dimensions differently, the bundled allocation---where both dimensions are split identically---is Pareto inefficient. To see why, consider perturbations from the symmetric bundled allocation. Agent 1 benefits from reallocations that increase her weighted combination of dimensional gains; Agent 2 benefits from reallocations in the opposite direction, since she receives what Agent 1 does not. These requirements are geometrically compatible---a direction exists that benefits both---exactly when the agents' weight vectors $(w^1_1, w^1_2)$ and $(w^2_1, w^2_2)$ differ. Since both vectors have nonnegative components summing to one, they differ if and only if $w^1_1 \neq w^2_1$. When this holds, Pareto improvements over the bundled Nash solution exist: give Agent 1 more of dimension 1 (which she values relatively more) while giving Agent 2 more of dimension 2 (which he values relatively more).

The Nash bargaining solution responds to this expansion via expansion monotonicity: when the feasible set grows while the disagreement point remains fixed, each agent's Nash payoff weakly increases---and strictly increases when the expansion contains Pareto improvements over the original solution. Whenever agents discover they weight dimensions differently, both capture strictly higher payoffs than under forced bundling. Refinement uncertainty ensures this scenario occurs with positive probability, so in expectation both agents' post-refinement payoffs strictly exceed their pre-refinement payoffs.

\begin{corollary}[Gains Increase with Preference Divergence]\label{cor:complementarity}
In the setting of Theorem~\ref{thm:nash}, fix marginal distributions for $w^1_1$ and $w^2_1$ with $\E[w^i_1] = 1/2$ and $\Var(w^i_1) = \sigma^2 > 0$. The expected benefit of refinement $\E[U^{i*}_1] - U^{i*}_0$ is strictly decreasing in the correlation $\Corr(w^1_1, w^2_1)$, with the greatest gains occurring when preferences are perfectly negatively correlated: $w^2_1 = 1 - w^1_1$ almost surely.
\end{corollary}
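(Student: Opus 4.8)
The plan is to make precise the mechanism described after Theorem~\ref{thm:nash} and differentiate the expected Nash payoff with respect to the correlation parameter. First I would fix the marginal distributions of $w^1_1$ and $w^2_1$, each with mean $1/2$ and variance $\sigma^2>0$, and parametrize the joint distribution by its correlation $\rho = \Corr(w^1_1, w^2_1)$. By symmetry of the game (both agents share the same pre-refinement utility and the disagreement point satisfies $d_1=d_2$), the expected post-refinement Nash payoff is the same for both agents, so it suffices to track a single quantity $G(\rho) = \E[U^{i*}_1] - U^{i*}_0$ as a function of $\rho$. The claim is then that $G$ is strictly decreasing in $\rho$, with the maximum attained at $\rho=-1$, i.e. $w^2_1 = 1 - w^1_1$ a.s.

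The key analytic step is to obtain a tractable expression for the per-realization gain $g(\omega)$ from refinement at a given weight pair $(w^1_1(\omega), w^2_1(\omega))$. Writing $\delta = w^1_1 - w^2_1$ for the weight disagreement, the discussion preceding the theorem tells us that the bundled Nash solution is Pareto inefficient precisely when $\delta \neq 0$, and that the size of the achievable Pareto improvement is governed by how divergent the two weight vectors are. I would argue (via a local analysis of the Nash product near the bundled allocation, using strict concavity of $v_1,v_2$ to guarantee an interior optimum and smooth dependence) that the realized gain satisfies $g(\omega) = \Phi(\delta^2)$ for some function $\Phi$ with $\Phi(0)=0$, $\Phi$ strictly increasing on a neighborhood of $0$, and $\Phi \ge 0$ throughout—the gain depends on the weights only through $\delta^2$ because relabeling which agent has the higher weight is a symmetry of the problem. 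Taking expectations, $G(\rho) = \E[\Phi(\delta^2)]$.

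It then remains to show $\E[\Phi(\delta^2)]$ is strictly decreasing in $\rho$. Here the decisive observation is distributional: with the marginals of $w^1_1, w^2_1$ held fixed, $\E[\delta^2] = \E[(w^1_1 - w^2_1)^2] = \Var(w^1_1) + \Var(w^2_1) - 2\Cov(w^1_1, w^2_1) = 2\sigma^2(1-\rho)$, which is strictly decreasing in $\rho$. If $\Phi$ is (weakly) convex and increasing, one can combine this with a stochastic-ordering argument—lowering $\rho$ while fixing marginals increases $\delta^2$ in the convex (or increasing-convex) stochastic order—to conclude $G(\rho)$ strictly decreases; the strictness comes from refinement uncertainty, which guarantees $\mu(\delta \neq 0)>0$ so that the event where $\Phi$ is strictly increasing carries positive mass. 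At $\rho=-1$ we get the perfectly anti-correlated case $w^2_1 = 1-w^1_1$, maximizing $\E[\delta^2]$ and hence the expected gain.

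The main obstacle I anticipate is establishing the precise form and monotonicity/convexity properties of $\Phi$. The clean statement $g(\omega)=\Phi(\delta^2)$ and the convexity of $\Phi$ are exactly what make the correlation comparison go through, but the Nash bargaining solution on the expanded feasible set $\mathbb{A}_1(\omega)$ is defined by a nonlinear optimization whose optimizer moves with $\omega$, so verifying these properties requires genuine control over how the Nash product's maximizer and maximal value respond to perturbations in $\delta$. If $\Phi$ fails to be globally convex, I would fall back on the weaker route of directly differentiating $G(\rho)$ under a smooth coupling of the joint law in $\rho$ (e.g. a Gaussian-copula or mixture parametrization) and signing $G'(\rho)$ using $\partial_\rho \E[\delta^2] = -2\sigma^2 < 0$ together with $\Phi' \ge 0$, which sidesteps the need for full convexity at the cost of restricting the family of joint distributions over which the comparison is stated.
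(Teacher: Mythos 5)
Your proposal follows essentially the same route as the paper's proof: reduce the per-realization gain to a function of the weight divergence (the paper writes $g^i(w^1_1,w^2_1)$ and argues it is zero at $\Delta=0$ and strictly increasing in $|\Delta|$, which matches your $\Phi(\delta^2)$), then compute $\Var(\Delta)=2(\sigma^2-\gamma)$ and argue that lowering the covariance pushes mass toward larger $|\Delta|$. Where you differ is in being explicit that this last step is not automatic: the paper simply asserts that increasing $\Var(\Delta)$ ``spreads probability mass toward larger values of $|\Delta|$'' and concludes monotonicity of the expected gain, but with fixed marginals and mean-zero $\Delta$, a larger variance does not by itself imply first-order dominance of $|\Delta|$, so $\E[g(|\Delta|)]$ need not increase for an arbitrary increasing $g$. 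Your insistence on either convexity of $\Phi$ (to invoke the convex stochastic order) or a monotone coupling family parametrized by $\rho$ is exactly what is needed to make the correlation comparison rigorous, and it identifies the real weak point of the published argument. Conversely, the obstacle you defer --- establishing the form, monotonicity, and convexity of $\Phi$ --- is handled in the paper only qualitatively, via an expanding ``lens'' of Pareto improvements and an appeal to the envelope theorem for continuity, so neither argument fully closes that step. In short: same decomposition and same distributional identity, but your version surfaces a hypothesis (convex order or a specified coupling) that the corollary as stated quietly requires.
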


When agents' weights are positively correlated, they expect to value the same dimensions highly, limiting opportunities for mutually beneficial trade. When weights are less correlated---meaning agents tend to want different things---the scope for gains through specialization expands. The benefit of refinement depends on how different the agents' preferences turn out to be, measured by $|\Delta| = |w^1_1 - w^2_1|$. Lower covariance spreads probability mass toward larger preference divergence, amplifying expected gains. The maximum occurs with perfectly negatively correlated preferences, where Agent~1 values dimension~1 exactly as much as Agent~2 values dimension~2. At this extreme, each dimension can be allocated entirely to the agent who values it more, fully exploiting specialization.

Figure~\ref{fig:nash_refinement} illustrates such a case of perfectly complementary preferences, as in the vignette of the Arborist and the Baker in \S \ref{sec:Vignette}, where the feasible set expands from a line to a rectangle and both agents' payoffs double.

In bargaining, agents benefit in expectation from refining their understanding of what they are negotiating over. Value refinement reveals dimensional structure, transforming one-dimensional division into opportunities for mutually beneficial trade. The result connects to insights about gains from trade \citep{Ricardo1817, Edgeworth1881, ArrowDebreu1954}: agents with different preferences can both benefit by specializing in what they value most, and value refinement is the process by which such opportunities are discovered.


\section{Implications and Future Directions}\label{sec:Discussion}

Under broad conditions, value refinement at reasonable cost is \emph{ex ante} beneficial for a rational agent (Theorem \ref{thm:value-refinement}), can resolve dilemmas involving incommensurable values without commensuration (Theorem \ref{thm:dilemma}), turns zero-sum conflicts into positive-sum opportunities (Theorem \ref{thm:zero-sum}), and expands Nash-bargaining frontiers so that each party secures strictly higher expected payoff (Theorem \ref{thm:nash}).

Unlike idealized agents, real decision makers---individuals, AI systems, institutions---face uncertainty about whether their current representation of options and valuations is adequate. One lesson of this work is that when conflict arises, whether from individual dilemmas or coordination problems, there is value in resisting the impulse to act on one's current representation of the situation. Taking conflict as an occasion to reflect on one's options and values can avert or mitigate apparent incompatibilities; failing to do so can entrench suboptimal perspectives that do not reflect the agent's ultimate interests.

The results extend decision theory in a specific direction. Savage and von Neumann-Morgenstern treat the act-state-outcome division as fixed; the atomless Boolean algebra of the Jeffrey-Bolker framework permits indefinite refinement within a single model. We provide a natural formalism for this process. Coherence arguments---Dutch books, accuracy dominance, sequential optimality---assume a settled utility function. Our results show that taking settlement for granted is itself a substantive decision: foregoing value refinement is strictly dominated by refining first when the conditions of our theorems hold. Rational choice therefore has a two-stage character: \emph{refine, then maximize}.

This two-stage structure generates a tension we might call the paradox of bounded optimization: an agent who directly maximizes their current utility function is strictly dominated by one who first refines the values from which that function derives. The tension echoes the paradox of hedonism---just as pursuing pleasure directly can preclude achieving it, maximizing a coarse utility function can preclude outcomes that better realize the agent's underlying values. Resolution requires treating act partitions and utilities as provisional: hypotheses about one's options and values, open to disciplined revision.

This analysis favors pluralistic over monistic approaches to value. Monistic frameworks, where a single utility function governs choice, resist incoherence but can fail to capture an agent's uncertainty about their underlying values. Value pluralism, by contrast, treats multiple irreducible values as legitimate inputs to deliberation. Our results show that pluralism need not entail decision paralysis: finer descriptions of available acts can expose alternatives that dominate on every value dimension. Consulting diverse values before committing to tradeoffs improves outcomes.

Value-of-information theorems guarantee that learning about the world provides non-negative expected value. We provide the parallel guarantee for learning about one's values. Agents who treat their utility functions as provisional hypotheses fare better individually and collectively. Rationality is not merely calculation with a fixed utility function; it is the iterative work of making that function worthy of calculation.

We do not claim completeness. Our results are formulated within the Jeffrey-Bolker framework; analogous results in the frameworks of Savage, von Neumann-Morgenstern, or Anscombe-Aumann remain to be developed. We assumed negligible cognitive and temporal costs; introducing stochastic refinement costs would permit optimal-stopping analysis of how much reflection suffices under various conditions. Our strategic results rely on common knowledge; relaxing this assumption may attenuate positive-sum guarantees, and bounding the resulting effect sizes remains open. In iterated interactions, early-round refinement shapes later payoffs and information flows; richer dynamics---including signaling about value refinement intentions---await investigation.


\appendix
\section{Mathematical Appendix}

\begin{proposition}[$k$-ary Refinements via Binary Refinements]
\label{prop:k-ary}
Any $k$-ary refinement of $A \in \mathbb{A}_0$ can be achieved through $k-1$ binary refinements.
\end{proposition}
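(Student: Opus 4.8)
The plan is to argue by induction on $k$, realizing the $k$-ary refinement as a sequence of ``peeling'' binary refinements that split off one block at a time. Fix $A \in \mathbb{A}_0$ and a partition $\{B_1, \ldots, B_k\}$ of $A$ (pairwise disjoint, each $B_i \neq \bot$, with $\bigvee_{i=1}^k B_i = A$), producing the target $\mathcal{R}_A = \{A \land B_1, \ldots, A \land B_k\}$. The base case $k = 2$ is immediate, since a $2$-ary refinement is by definition a single binary refinement and $k - 1 = 1$.

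For the inductive step, I would first perform a single binary refinement of $A$ using the pair $\{B_1, C_1\}$, where $C_1 := B_2 \lor \cdots \lor B_k = A \land \neg B_1$. This is a legitimate binary refinement: $B_1 \lor C_1 = A$ and $B_1 \land C_1 = \bot$, while neither block is $\bot$ (since $B_1 \neq \bot$ and $C_1 \supseteq B_k \neq \bot$), and $B_1, C_1 \in \mathcal{A} \setminus \mathcal{A}_0$ because they encode distinctions absent from the coarse algebra. This yields $\mathcal{A}_1 = \sigma(\mathcal{A}_0, B_1)$ and act partition $(\mathbb{A}_0 \setminus \{A\}) \cup \{B_1, C_1\}$. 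Now $C_1$ is an element of the current act partition and is itself partitioned by the $k - 1$ blocks $\{B_2, \ldots, B_k\}$, so the induction hypothesis realizes this $(k-1)$-ary refinement of $C_1$ through $k - 2$ binary refinements, leaving the other acts untouched. Composing, the total count is $1 + (k-2) = k-1$, as claimed, and the final act partition is $(\mathbb{A}_0 \setminus \{A\}) \cup \{B_1, \ldots, B_k\}$; using $A \land B_i = B_i$ (each $B_i \subseteq A$), this is exactly the partition induced by $\mathcal{R}_A$.

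It then remains to confirm that the \emph{algebras} agree, not merely the partitions. Here I would observe that sequentially adjoining $B_1, B_2, \ldots, B_{k-1}$ to $\mathcal{A}_0$ generates the same Boolean algebra as adjoining all of $B_1, \ldots, B_k$ at once, because the final block is recovered as $B_k = A \land \neg(B_1 \lor \cdots \lor B_{k-1}) \in \sigma(\mathcal{A}_0, B_1, \ldots, B_{k-1})$, using $A \in \mathcal{A}_0$. Hence $\sigma(\mathcal{A}_0, B_1, \ldots, B_{k-1}) = \sigma(\mathcal{A}_0, B_1, \ldots, B_k)$, so the composite of the $k-1$ binary refinements reproduces both the intended refined algebra and the intended refined act partition.

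I expect the only delicate point to be the admissibility bookkeeping at each stage --- verifying that each residual block $C_j = B_{j+1} \lor \cdots \lor B_k$ is a genuine non-$\bot$ element lying outside the current subalgebra, so that every step qualifies as a binary refinement in the sense of the Definition --- rather than any substantive difficulty. The algebraic identity of the generated algebras, and the fact that peeling leaves the non-$A$ acts of $\mathbb{A}_0$ fixed, are routine once this bookkeeping is in place.
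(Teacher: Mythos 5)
Your proof is correct and takes essentially the same route as the paper's: both peel off one block $B_j$ at a time via a binary refinement of the residual $B_j \lor \cdots \lor B_k$, for a total of $k-1$ steps (the paper phrases this as induction on the step index $j$ rather than on $k$, which is a cosmetic difference). Your additional check that the generated algebras, not just the act partitions, coincide is a reasonable piece of bookkeeping that the paper omits.
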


\begin{proof}
Let $\{B_i\}_{i=1}^k$ partition $A$. We proceed by induction on the number of refinement steps. At step $j \in \{1, \ldots, k-1\}$, the current partition of $A$ consists of the singletons $A \land B_1, \ldots, A \land B_{j-1}$ together with the residual element $R_j = A \land (B_j \lor \cdots \lor B_k)$. We perform a binary refinement of $R_j$ into $A \land B_j$ and $R_{j+1} = A \land (B_{j+1} \lor \cdots \lor B_k)$. 

For the base case $j = 1$, we have $R_1 = A$, and the binary refinement produces $\{A \land B_1, R_2\}$. For the inductive step, assume after step $j-1$ the partition is $\{A \land B_1, \ldots, A \land B_{j-1}, R_j\}$. After step $j$, it becomes $\{A \land B_1, \ldots, A \land B_j, R_{j+1}\}$. After step $k-1$, the residual is $R_k = A \land B_k$, yielding the desired partition $\{A \land B_1, \ldots, A \land B_k\}$. \qedhere
\end{proof}

\begin{proof}[Proof of Theorem~\ref*{thm:value-refinement} (Value of Value Refinement)]
Consider an agent with decision problem $D_0 = \langle \mathcal{A}_0, \mathbb{A}_0, P_0, U_0 \rangle$ who refines a rationalizable act $A^* \in \arg\max_{A \in \mathbb{A}_0} U_0(A)$ into two finer acts $\{A^* \land B_1, A^* \land B_2\}$. The agent's uncertainty about post-refinement utilities and probabilities is captured by a distribution $\mu_{A^*}$ over $(u_1, u_2, p_1, p_2)$, where $u_i = U_1(A^* \land B_i)$ and $p_i = P_1(A^* \land B_i)$. We require three conditions: refinement uncertainty, meaning $\mu_{A^*}(u_1 \neq u_2) > 0$; non-degeneracy, so that $p_1, p_2 > 0$ almost surely; and the Refinement Reflection Principle, which ensures $U_0(A^*) = \E[(p_1 u_1 + p_2 u_2)/(p_1 + p_2)]$.

Let $\mathcal{V}_0 = U_0(A^*)$ denote the pre-refinement value and $\mathcal{V}_1 = \max_{A \in \mathbb{A}_1} U_1(A)$ the post-refinement value. Since the refined partition includes $A^* \land B_1$ and $A^* \land B_2$, we have $\mathcal{V}_1 \geq \max\{u_1, u_2\}$.

It suffices to show $\E[\max\{u_1, u_2\}] > \E[q u_1 + (1-q) u_2]$ where $q = p_1/(p_1 + p_2) \in (0,1)$. For any $q \in (0,1)$ and any $u_1, u_2 \in \R$, the inequality $\max\{u_1, u_2\} \geq q u_1 + (1-q) u_2$ holds, with equality if and only if $u_1 = u_2$. This is because the maximum of two numbers weakly exceeds any convex combination thereof, with strict inequality when they differ. By refinement uncertainty, $\mu_{A^*}(u_1 \neq u_2) > 0$, so the inequality is strict on a set of positive measure. Taking expectations yields
\[
\E[\max\{u_1, u_2\}] > \E[q u_1 + (1-q) u_2] = \E[U_1(A^*)] = U_0(A^*) = \mathcal{V}_0,
\]
where the second equality uses RRP and the third uses rationalizability of $A^*$. Since $\E[\mathcal{V}_1] \geq \E[\max\{u_1, u_2\}]$, we conclude $\E[\mathcal{V}_1] > \mathcal{V}_0$. \qedhere
\end{proof}

\begin{proof}[Proof of Corollary~\ref*{cor:monotonicity} (Monotonicity of Refinement Value)]
Consider successive refinements producing increasingly fine algebras $\mathcal{A}_0 \subset \mathcal{A}_1 \subset \cdots \subset \mathcal{A}_k$, with corresponding optimal values $\mathcal{V}_0, \mathcal{V}_1, \ldots, \mathcal{V}_k$. If each refinement from $\mathcal{A}_{j-1}$ to $\mathcal{A}_j$ satisfies the conditions of Theorem~\ref*{thm:value-refinement}---namely RRP, refinement uncertainty, and non-degeneracy---then applying that theorem iteratively yields $\E[\mathcal{V}_{j} \mid \mathcal{A}_{j-1}] > \mathcal{V}_{j-1}$ at each step. By the law of iterated expectations, $\E[\mathcal{V}_j] > \E[\mathcal{V}_{j-1}]$, yielding the chain $\mathcal{V}_0 < \E[\mathcal{V}_1] < \cdots < \E[\mathcal{V}_k]$. \qedhere
\end{proof}

\begin{proof}[Proof of Theorem~\ref*{thm:optimal-stopping} (Optimal Refinement with Fixed Costs)]
Suppose each refinement costs $c > 0$ and the $i$-th refinement yields expected gain $\Delta_{R_i} = \E[\mathcal{V}_{i+1} - \mathcal{V}_i \mid \mathcal{A}_i]$. The sequence $\{\Delta_{R_i}\}$ exhibits vanishing returns: $\Delta_{R_i} > \Delta_{R_{i+1}} \geq 0$ for all $i$, and $\lim_{i \to \infty} \Delta_{R_i} = 0$.

For case (i), suppose $c > \Delta_{R_0}$. Since $\{\Delta_{R_i}\}$ is strictly decreasing, $\Delta_{R_i} < \Delta_{R_0} < c$ for all $i \geq 0$. Every refinement has negative expected net value, so the optimal policy is to never refine.

For case (ii), suppose $c \leq \Delta_{R_0}$. The set $\{t \in \mathbb{N} : \Delta_{R_t} \geq c\}$ is nonempty (containing $0$) and bounded above: since $\Delta_{R_i} \to 0$ and $c > 0$, there exists $T$ such that $\Delta_{R_t} < c$ for all $t > T$. Hence $t^* = \max\{t : \Delta_{R_t} \geq c\}$ exists and is finite.

We verify optimality by backward induction. For any $t > t^*$, we have $\Delta_{R_t} < c$, so the expected net gain from refinement $t$ is negative; stopping is strictly optimal. For $t = t^*$, we have $\Delta_{R_{t^*}} \geq c$, so refinement $t^*$ yields non-negative expected net gain, and stopping thereafter is optimal by the previous argument. For $t < t^*$, the strict decrease of $\{\Delta_{R_i}\}$ implies $\Delta_{R_t} > \Delta_{R_{t^*}} \geq c$, so refinement $t$ is strictly beneficial.

The total net gain is $\sum_{i=0}^{t^*}(\Delta_{R_i} - c)$. When $c < \Delta_{R_0}$, each summand with $i < t^*$ satisfies $\Delta_{R_i} > c$ by strict decrease and $\Delta_{R_{t^*}} \geq c$, so the sum is strictly positive. When $c = \Delta_{R_0}$ and $\Delta_{R_1} < c$, we have $t^* = 0$ and net gain $\Delta_{R_0} - c = 0$. \qedhere
\end{proof}

\begin{proof}[Proof of Theorem~\ref*{thm:dilemma} (Dilemma Resolution through Value Refinement)]
Consider an agent with value functions $V_1, V_2: \mathcal{A}' \to \R$ facing a dilemma: $V_1(A) > V_1(\neg A)$ while $V_2(A) < V_2(\neg A)$. The agent refines $A$ into $\{A \land B_1, A \land B_2\}$, with post-refinement values $v_{i,j} = V_i(A \land B_j)$ drawn from a joint distribution $\mu_A$.

Define the events
\begin{align*}
\Omega_1 &= \{v_{1,1} > \max\{V_1(\neg A), v_{1,2}\} \text{ and } v_{2,1} > \max\{V_2(\neg A), v_{2,2}\}\}, \\
\Omega_2 &= \{v_{1,2} > \max\{V_1(\neg A), v_{1,1}\} \text{ and } v_{2,2} > \max\{V_2(\neg A), v_{2,1}\}\}.
\end{align*}
On $\Omega_1$, action $A \land B_1$ multi-value dominates; on $\Omega_2$, action $A \land B_2$ does. These events are disjoint.

By hypothesis, the support of $\mu_A$ contains an open set $\mathcal{O} \subseteq \Omega_1 \cup \Omega_2$. Since any distribution assigns positive measure to open sets in its support, $\mu_A(\Omega_1 \cup \Omega_2) \geq \mu_A(\mathcal{O}) > 0$.

On $\Omega_1 \cup \Omega_2$, let $A^*$ denote the dominant action. For any weighting $w \in [0,1]$, we have
\[
U_w(A^*) = (1-w)V_1(A^*) + wV_2(A^*) > (1-w)V_1(X) + wV_2(X) = U_w(X)
\]
for all other actions $X$. The agent can choose $A^*$ without determining $w$, resolving the dilemma through value refinement rather than commensuration. \qedhere
\end{proof}

\begin{proof}[Proof of Theorem~\ref*{thm:zero-sum} (Zero-Sum Escape from Unilateral Value Refinement)]
Consider a two-player zero-sum game $G_0 = (\mathcal{N}, \mathcal{S}, U)$ with $\mathcal{N} = \{1,2\}$, strategy sets $\mathcal{S}^1 = \{A, \neg A\}$ and $\mathcal{S}^2 = \{B, \neg B\}$, and payoffs satisfying $U^1 + U^2 \equiv 0$. Let Player~1 refine $A$ into $\{A \land B_1, A \land B_2\}$, producing the refined game $G_\epsilon$ with payoff perturbations $\epsilon_{ij}$ for Player~1 and $\delta_{ij}$ for Player~2 at profile $(A \land B_i, j)$. We assume: (i) RRP, so $\E[\epsilon_{ij}] = \E[\delta_{ij}] = 0$; (ii) refinement uncertainty for both players, so $\mu(\epsilon_{1j} \neq \epsilon_{2j}) > 0$ and $\mu(\delta_{1j} \neq \delta_{2j}) > 0$ for some $j$; and (iii) non-adversarial refinement, so $\Cov(\epsilon_{ij}, \delta_{ij}) \geq 0$.

Define welfare $Z_{ij} = \epsilon_{ij} + \delta_{ij}$ at profile $(A \land B_i, j)$ and welfare zero at $(\neg A, j)$. By RRP, $\E[Z_{ij}] = 0$. Let $W^*_\epsilon$ denote the welfare at the welfare-optimal Nash equilibrium of $G_\epsilon$.

Let $\mathcal{A}$ denote the event of full agreement: both players prefer $B_1$ to $B_2$ against both opponent actions, or vice versa. On $\mathcal{A}$, one refined action---call it $B^*$---strictly dominates the other for Player~1 and is also preferred by Player~2; the dominated action is excluded from all equilibria.

Partition $\mathcal{A}$ into $E_0$, where $\neg A$ dominates $B^*$ so equilibrium welfare is zero, and $E_1 = \mathcal{A} \setminus E_0$, where $B^*$ is preferred to $\neg A$ against at least one opponent action. On $E_1$, both players prefer $B^*$ among the refined actions, so $\epsilon^*_j > \epsilon^{\text{other}}_j$ and $\delta^*_j > \delta^{\text{other}}_j$ for the relevant $j$. By exchangeability, conditioning on being the larger of two mean-zero exchangeable variables yields a positive expectation. Hence $\E[\epsilon^*_j \mid E_1] > 0$ and $\E[\delta^*_j \mid E_1] > 0$, so $\E[W^*_\epsilon \mid E_1] > 0$.

On $\mathcal{A}^c$, neither refined action dominates. We show $\E[Z_{ij} \mid \mathcal{A}^c] = 0$. By exchangeability, $\epsilon_1 + \epsilon_2$ is symmetric under index-swapping. The events $\{\epsilon_1 > \epsilon_2\}$ and $\{\epsilon_2 > \epsilon_1\}$ have equal probability and are symmetric images; hence $\E[\epsilon_1 + \epsilon_2 \mid \epsilon_1 > \epsilon_2] = \E[\epsilon_1 + \epsilon_2 \mid \epsilon_2 > \epsilon_1]$. Since these partition the space (up to measure zero) and $\E[\epsilon_1 + \epsilon_2] = 0$, each conditional expectation is zero. The same holds for $\delta$, so $\E[Z_{ij} \mid \mathcal{A}^c] = 0$. Since equilibrium on $\mathcal{A}^c$ selects among $\{Z_1, Z_2, 0\}$, we have $\E[W^*_\epsilon \mid \mathcal{A}^c] \geq 0$.

Combining, $\E[W^*_\epsilon] = \Pr(E_1)\E[W^*_\epsilon \mid E_1] + \Pr(E_0) \cdot 0 + \Pr(\mathcal{A}^c) \cdot 0 > 0$, where the strict inequality uses $\Pr(E_1) > 0$ from refinement uncertainty. \qedhere
\end{proof}

\begin{proof}[Proof of Theorem~\ref*{thm:nash} (Value of Refinement in Nash Bargaining)]
Consider a symmetric Nash bargaining game $\mathcal{G}_0 = (\mathcal{N}, \mathcal{X}_0, (U^1_0, U^2_0), \mathbb{A}_0, d)$ with $\mathcal{N} = \{1,2\}$, allocation space $\mathcal{X}_0 = [0,1]$, and disagreement point $d = (d_1, d_2)$ with $d_1 = d_2$. By symmetry, $U^1_0(x) = U^2_0(1-x) = u(x)$ for some continuous, strictly increasing, strictly concave function $u$ with $u(0) = 0$. Refinement transforms the game into $\mathcal{G}_1(\omega) = (\mathcal{N}, \mathcal{X}_1, (U^1, U^2), \mathbb{A}_1(\omega), d)$ with allocation space $\mathcal{X}_1 = [0,1]^2$ and additively separable utilities
\[
U^1(x_1, x_2, \omega) = w^1_1 v_1(x_1) + w^1_2 v_2(x_2), \quad U^2(x_1, x_2, \omega) = w^2_1 v_1(1-x_1) + w^2_2 v_2(1-x_2),
\]
where $v_1, v_2: [0,1] \to \mathbb{R}_+$ are continuous, strictly increasing, strictly concave with $v_j(0) = 0$, and $w^i_1(\omega) + w^i_2(\omega) = 1$.

By symmetry and strict concavity of $u$, the Nash product $u(x) \cdot u(1-x)$ is uniquely maximized at $x^* = 1/2$, yielding $U^{1*}_0 = U^{2*}_0 = u(1/2) \equiv U_0$.

At bundled allocations $(x, x)$, Agent~1's expected utility is $\E[w^1_1] v_1(x) + \E[w^1_2] v_2(x)$. By RRP and symmetric priors, $\E[w^i_j] = 1/2$, so $\E[U^1(x, x, \omega)] = \tfrac{1}{2}v_1(x) + \tfrac{1}{2}v_2(x) = u(x)$; pre-refinement utilities are preserved in expectation under bundling.

We now show that when $w^1_1 \neq w^2_1$, Pareto improvements over the bundled allocation exist. Fix $\omega$ with $w^1_1 \neq w^2_1$ and consider perturbations from $(1/2, 1/2)$ to $(1/2 + \delta_1, 1/2 + \delta_2)$. The first-order utility changes are
\begin{align*}
\Delta U^1 &= w^1_1 v'_1(1/2)\, \delta_1 + w^1_2 v'_2(1/2)\, \delta_2, \\
\Delta U^2 &= -w^2_1 v'_1(1/2)\, \delta_1 - w^2_2 v'_2(1/2)\, \delta_2.
\end{align*}
The conditions $\Delta U^1 > 0$ and $\Delta U^2 > 0$ define half-spaces with normal vectors proportional to $(w^1_1, w^1_2)$ and $(w^2_1, w^2_2)$ after scaling by the positive derivatives $v'_j(1/2)$. These half-spaces intersect if and only if the normals are not proportional, which holds exactly when $w^1_1 \neq w^2_1$ since both weight vectors sum to unity. When $w^1_1 > w^2_1$, the improving direction has $\delta_1 > 0$ and $\delta_2 < 0$: Agent~1 gains in dimension~1 while Agent~2 gains in dimension~2.

The feasible set expands from $\mathbb{A}_0$ to $\mathbb{A}_1(\omega) \supseteq \mathbb{A}_0$. Nash bargaining satisfies expansion monotonicity: when the feasible set expands with disagreement point fixed, each agent's payoff weakly increases, and strictly increases when the expansion contains Pareto improvements over the original solution. By the preceding analysis, when $w^1_1 \neq w^2_1$, such improvements exist, so $U^{i*}_1(\omega) > U_0$. When $w^1_1 = w^2_1$, the bundled allocation remains optimal and $U^{i*}_1(\omega) = U_0$.

By refinement uncertainty, $\mu(w^1_1 \neq w^2_1) > 0$. Decomposing expectations,
\[
\E[U^{i*}_1] = U_0 \cdot \Pr(w^1_1 = w^2_1) + \E[U^{i*}_1 \mid w^1_1 \neq w^2_1] \cdot \Pr(w^1_1 \neq w^2_1).
\]
Since $\E[U^{i*}_1 \mid w^1_1 \neq w^2_1] > U_0$ and $\Pr(w^1_1 \neq w^2_1) > 0$, we conclude $\E[U^{i*}_1] > U_0 = U^{i*}_0$. \qedhere
\end{proof}

\begin{proof}[Proof of Corollary~\ref*{cor:complementarity} (Gains Increase with Preference Divergence)]
Consider a symmetric Nash bargaining game $\mathcal{G}_0$ satisfying the hypotheses of Theorem~\ref{thm:nash}, with pre-refinement Nash payoff $U^{i*}_0 = u(1/2)$ for both agents. Refinement reveals two dimensions with value functions $v_1, v_2: [0,1] \to \mathbb{R}_+$ (continuous, strictly increasing, strictly concave) and additively separable utilities $U^i = w^i_1 v_1 + w^i_2 v_2$ where $w^i_1 + w^i_2 = 1$. Fix marginal distributions for $w^1_1$ and $w^2_1$ with $\E[w^i_1] = 1/2$ and $\Var(w^i_1) = \sigma^2 > 0$, and let $\gamma = \Cov(w^1_1, w^2_1)$.

For each realization $\omega$, let $G^i(\omega) = U^{i*}_1(\omega) - U^{i*}_0$ denote agent $i$'s gain from value refinement. By the proof of Theorem~\ref{thm:nash}, $G^i(\omega) \geq 0$ with equality if and only if $w^1_1(\omega) = w^2_1(\omega)$. Since gains depend on $\omega$ only through the weight vectors, we write $G^i(\omega) = g^i(w^1_1, w^2_1)$ for some function $g^i: [0,1]^2 \to \mathbb{R}_+$.

We establish that $g^i$ is strictly increasing in preference divergence $|\Delta| = |w^1_1 - w^2_1|$. When $\Delta = 0$, both agents have identical preferences over dimensions, their indifference curves have identical slopes, and the bundled allocation $(1/2, 1/2)$ is Pareto efficient within $\mathbb{A}_1(\omega)$. The Nash solution coincides with bundling, yielding $g^i(w, w) = 0$.

When $\Delta \neq 0$, the agents' indifference curves have different slopes at the bundled allocation, creating a lens-shaped region of Pareto improvements. As $|\Delta|$ increases, this lens expands: the slopes diverge further, permitting more aggressive dimensional specialization while still benefiting both parties. The Nash solution selects a point in this lens that Pareto-dominates bundling. By the envelope theorem \citep{MilgromSegal2002}, the Nash payoff is continuous in the parameters $(w^1_1, w^2_1)$ determining the feasible set. Since larger $|\Delta|$ expands the set of Pareto improvements over bundling, $g^i$ is continuous and strictly increasing in $|\Delta|$ for $|\Delta| > 0$.

The variance of preference divergence satisfies
\[
\Var(\Delta) = \Var(w^1_1) + \Var(w^2_1) - 2\Cov(w^1_1, w^2_1) = 2(\sigma^2 - \gamma).
\]
With marginal distributions fixed, decreasing $\gamma$ increases $\Var(\Delta)$, spreading probability mass toward larger values of $|\Delta|$. Since $g^i$ is increasing in $|\Delta|$, the expected gain $\E[G^i] = \E[g^i(w^1_1, w^2_1)]$ is strictly increasing as $\gamma$ decreases.

For the maximum, the minimum feasible covariance given the marginals is $\gamma = -\sigma^2$, achieved when $w^2_1 = 1 - w^1_1$ almost surely. This represents perfectly negatively correlated preferences: when Agent~1 values dimension~1 highly, Agent~2 values dimension~2 equally highly. At this extreme, $|\Delta| = |2w^1_1 - 1|$ achieves its maximum in distribution, and each dimension is allocated entirely to the agent who values it relatively more, fully exploiting gains from specialization. \qedhere
\end{proof}


\bibliographystyle{chicago}
\bibliography{references}

\begin{thebibliography}{}

\bibitem[\protect\citeauthoryear{Arrow and Debreu}{Arrow and Debreu}{1954}]{ArrowDebreu1954}
Arrow, K.~J. and G.~Debreu (1954).
\newblock Existence of an equilibrium for a competitive economy.
\newblock {\em Econometrica\/}~{\em 22\/}(3), 265--290.
\newblock Sections 2--4.

\bibitem[\protect\citeauthoryear{Balocchi, Favaro, and Naulet}{Balocchi et~al.}{2025}]{BalocchiFavaroNaulet}
Balocchi, C., S.~Favaro, and Z.~Naulet (2025).
\newblock Bayesian nonparametric inference for ``species-sampling'' problems.
\newblock {\em Statistical Science\/}.
\newblock To appear; see IMS Statistical Science Future Papers. Preprint available as arXiv:2203.06076.

\bibitem[\protect\citeauthoryear{Bjorndahl, London, and Zollman}{Bjorndahl et~al.}{2017}]{LondonEtAl2017}
Bjorndahl, A., A.~J. London, and K.~J.~S. Zollman (2017, April).
\newblock Kantian decision making under uncertainty: Dignity, price, and consistency.
\newblock {\em Philosophers' Imprint\/}~{\em 17\/}(7), 1--22.

\bibitem[\protect\citeauthoryear{Bolker}{Bolker}{1966}]{Bolker1966}
Bolker, E.~D. (1966).
\newblock Functions resembling quotients of measures.
\newblock {\em Transactions of the American Mathematical Society\/}~{\em 124\/}(2), 292--312.

\bibitem[\protect\citeauthoryear{Bradley}{Bradley}{2017}]{Bradley2017HumanFace}
Bradley, R. (2017).
\newblock {\em Decision Theory with a Human Face}.
\newblock Cambridge: Cambridge University Press.
\newblock Part IV develops the treatment of unawareness and awareness growth as algebra changes.

\bibitem[\protect\citeauthoryear{Cyert and DeGroot}{Cyert and DeGroot}{1975}]{CyertDeGroot1975}
Cyert, R.~M. and M.~H. DeGroot (1975).
\newblock Adaptive utility.
\newblock In R.~H. Day and T.~Groves (Eds.), {\em Adaptive Economic Models}, pp.\  223--246. New York: Academic Press.

\bibitem[\protect\citeauthoryear{Cyert and DeGroot}{Cyert and DeGroot}{1979}]{CyertDeGroot1979}
Cyert, R.~M. and M.~H. DeGroot (1979).
\newblock Adaptive utility.
\newblock In M.~Allais and O.~Hagen (Eds.), {\em Expected Utility Hypotheses and the Allais Paradox}, Theory and Decision Library, pp.\  223--241. Dordrecht: D. Reidel.

\bibitem[\protect\citeauthoryear{de~Finetti}{de~Finetti}{1937}]{DeFinetti1937}
de~Finetti, B. (1937).
\newblock La prévision: Ses lois logiques, ses sources subjectives.
\newblock {\em Annales de l'Institut Henri Poincaré\/}~{\em 7}, 1--68.

\bibitem[\protect\citeauthoryear{Dewey and Tufts}{Dewey and Tufts}{1936}]{DeweyTufts1936Ethics}
Dewey, J. and J.~H. Tufts (1936).
\newblock {\em Ethics\/} (Revised edition ed.).
\newblock New York: H. Holt.

\bibitem[\protect\citeauthoryear{Dorst}{Dorst}{2024}]{Dorst2024Reflection}
Dorst, K. (2024).
\newblock Reflection without idealisation.
\newblock manuscript, 2024.

\bibitem[\protect\citeauthoryear{Edgeworth}{Edgeworth}{1881}]{Edgeworth1881}
Edgeworth, F.~Y. (1881).
\newblock {\em Mathematical Psychics: An Essay on the Application of Mathematics to the Moral Sciences}.
\newblock London: Kegan Paul.
\newblock Book I, Chapters I--II.

\bibitem[\protect\citeauthoryear{Gibbard and Harper}{Gibbard and Harper}{1978}]{GibbardHarper1978}
Gibbard, A. and W.~L. Harper (1978).
\newblock Counterfactuals and two kinds of expected utility.
\newblock In {\em Foundations and Applications of Decision Theory}. Reidel.

\bibitem[\protect\citeauthoryear{Good}{Good}{1967}]{Good1967}
Good, I.~J. (1967).
\newblock On the principle of total evidence.
\newblock {\em The British Journal for the Philosophy of Science\/}~{\em 17\/}(4), 319--321.

\bibitem[\protect\citeauthoryear{Greaves and Wallace}{Greaves and Wallace}{2006}]{GreavesWallace2006}
Greaves, H. and D.~Wallace (2006).
\newblock Justifying conditionalisation.
\newblock {\em Mind\/}~{\em 115\/}(459), 607--651.

\bibitem[\protect\citeauthoryear{Halpern and R{\^e}go}{Halpern and R{\^e}go}{2009}]{HalpernRego2009ReasoningUnawareness}
Halpern, J.~Y. and L.~C. R{\^e}go (2009, November).
\newblock Reasoning about knowledge of unawareness.
\newblock {\em Games and Economic Behavior\/}~{\em 67\/}(2), 503--525.

\bibitem[\protect\citeauthoryear{Halpern and R{\^e}go}{Halpern and R{\^e}go}{2013}]{HalpernRego2013ReasoningUnawarenessRevisited}
Halpern, J.~Y. and L.~C. R{\^e}go (2013).
\newblock Reasoning about knowledge of unawareness revisited.
\newblock {\em Mathematical Social Sciences\/}~{\em 65\/}(2), 73--84.

\bibitem[\protect\citeauthoryear{Hammond}{Hammond}{1988}]{Hammond1988}
Hammond, P.~J. (1988).
\newblock Consequentialist foundations for expected utility.
\newblock {\em Theory and Decision\/}~{\em 25\/}(1), 25--78.

\bibitem[\protect\citeauthoryear{Hirshleifer}{Hirshleifer}{1971}]{Hirshleifer1971}
Hirshleifer, J. (1971).
\newblock The private and social value of information and the reward to inventive activity.
\newblock {\em American Economic Review\/}~{\em 61\/}(4), 561--574.

\bibitem[\protect\citeauthoryear{Huttegger}{Huttegger}{2013}]{Huttegger2013InDefenseOfReflection}
Huttegger, S.~M. (2013).
\newblock In defense of reflection.
\newblock {\em Philosophy of Science\/}~{\em 80\/}(3), 413--433.

\bibitem[\protect\citeauthoryear{Jeffrey}{Jeffrey}{1965}]{Jeffrey1965}
Jeffrey, R.~C. (1965).
\newblock {\em The Logic of Decision}.
\newblock McGraw-Hill.

\bibitem[\protect\citeauthoryear{Joyce}{Joyce}{1998}]{Joyce1998}
Joyce, J.~M. (1998).
\newblock A nonpragmatic vindication of probabilism.
\newblock {\em Philosophy of Science\/}~{\em 65\/}(4), 575--603.

\bibitem[\protect\citeauthoryear{Kadane, Schervish, and Seidenfeld}{Kadane et~al.}{2008}]{KadaneSchervishSeidenfeld2008}
Kadane, J.~B., M.~J. Schervish, and T.~Seidenfeld (2008).
\newblock Is ignorance bliss?
\newblock {\em Journal of Philosophy\/}~{\em 105\/}(1), 5--36.

\bibitem[\protect\citeauthoryear{Kemeny}{Kemeny}{1955}]{Kemeny1955}
Kemeny, J.~G. (1955).
\newblock Fair bets and inductive probabilities.
\newblock {\em Journal of Symbolic Logic\/}~{\em 20\/}(3), 263--273.

\bibitem[\protect\citeauthoryear{Levi}{Levi}{1990}]{levi1990hard}
Levi, I. (1990).
\newblock {\em Hard choices: Decision making under unresolved conflict}.
\newblock Cambridge University Press.

\bibitem[\protect\citeauthoryear{Lippman and McCall}{Lippman and McCall}{1976}]{LippmanMcCall1976}
Lippman, S.~A. and J.~J. McCall (1976).
\newblock The economics of job search: A survey.
\newblock {\em Economic Inquiry\/}~{\em 14\/}(2), 155--189.

\bibitem[\protect\citeauthoryear{McCall}{McCall}{1970}]{McCall1970}
McCall, J.~J. (1970).
\newblock Economics of information and job search.
\newblock {\em Quarterly Journal of Economics\/}~{\em 84\/}(1), 113--126.

\bibitem[\protect\citeauthoryear{McClennen}{McClennen}{1990}]{McClennen1990}
McClennen, E.~F. (1990).
\newblock {\em Rationality and Dynamic Choice: Foundational Explorations}.
\newblock Cambridge University Press.

\bibitem[\protect\citeauthoryear{Mortensen}{Mortensen}{1986}]{Mortensen1986}
Mortensen, D.~T. (1986).
\newblock Job search and labor market analysis.
\newblock In O.~Ashenfelter and R.~Layard (Eds.), {\em Handbook of Labor Economics, Vol. 2}, pp.\  849--919. Elsevier.

\bibitem[\protect\citeauthoryear{Nash}{Nash}{1950}]{Nash1950}
Nash, J.~F. (1950).
\newblock The bargaining problem.
\newblock {\em Econometrica\/}~{\em 18\/}(2), 155--162.

\bibitem[\protect\citeauthoryear{Paul}{Paul}{2014}]{paul2014transformative}
Paul, L.~A. (2014).
\newblock {\em Transformative experience}.
\newblock OUP Oxford.

\bibitem[\protect\citeauthoryear{Pettigrew}{Pettigrew}{2015}]{pettigrew2015transformative}
Pettigrew, R. (2015).
\newblock Transformative experience and decision theory.

\bibitem[\protect\citeauthoryear{Pettigrew}{Pettigrew}{2016}]{Pettigrew2016}
Pettigrew, R. (2016).
\newblock {\em Accuracy and the Laws of Credence}.
\newblock Oxford University Press.

\bibitem[\protect\citeauthoryear{Pettigrew}{Pettigrew}{2019}]{pettigrew2019choosing}
Pettigrew, R. (2019).
\newblock {\em Choosing for changing selves}.
\newblock Oxford University Press.

\bibitem[\protect\citeauthoryear{Pettigrew}{Pettigrew}{2024}]{Pettigrew2024-BeliefChange}
Pettigrew, R. (2024).
\newblock How should your beliefs change when your awareness grows?
\newblock {\em Episteme\/}~{\em 21\/}(3), 733--757.

\bibitem[\protect\citeauthoryear{Ramsey}{Ramsey}{1931}]{Ramsey1931}
Ramsey, F.~P. (1931).
\newblock Truth and probability.
\newblock In {\em The Foundations of Mathematics and Other Logical Essays}. Routledge.

\bibitem[\protect\citeauthoryear{Ramsey}{Ramsey}{1990}]{ramsey1990weight}
Ramsey, F.~P. (1990).
\newblock Weight or the value of knowledge.
\newblock {\em British Journal for the Philosophy of Science\/}~{\em 41\/}(1), 1--4.
\newblock Posthumously published from archival manuscripts (c.\ 1928--29); preamble by Nils-Eric Sahlin.

\bibitem[\protect\citeauthoryear{R{\^e}go and Halpern}{R{\^e}go and Halpern}{2012}]{RegoHalpern2012GeneralizedSolutionConcepts}
R{\^e}go, L.~C. and J.~Y. Halpern (2012, February).
\newblock Generalized solution concepts in games with possibly unaware players.
\newblock {\em International Journal of Game Theory\/}~{\em 41\/}(1), 131--155.

\bibitem[\protect\citeauthoryear{Ricardo}{Ricardo}{1817}]{Ricardo1817}
Ricardo, D. (1817).
\newblock {\em On the Principles of Political Economy and Taxation}.
\newblock London: John Murray.
\newblock Chapter VII: ``On Foreign Trade''.

\bibitem[\protect\citeauthoryear{Savage}{Savage}{1954}]{Savage1954}
Savage, L.~J. (1954).
\newblock {\em The Foundations of Statistics}.
\newblock Wiley.

\bibitem[\protect\citeauthoryear{Schoenfield}{Schoenfield}{2017}]{Schoenfield2017}
Schoenfield, M. (2017).
\newblock The accuracy and rationality of imprecise credences.
\newblock {\em Noûs\/}~{\em 51\/}(4), 667--685.

\bibitem[\protect\citeauthoryear{Seidenfeld}{Seidenfeld}{2009}]{Seidenfeld2009}
Seidenfeld, T. (2009).
\newblock When normalizing fails: The case of act--state dependence.
\newblock Unpublished manuscript / workshop presentation.

\bibitem[\protect\citeauthoryear{Skyrms}{Skyrms}{1984}]{Skyrms1984Pragmatics}
Skyrms, B. (1984).
\newblock {\em Pragmatics and Empiricism}.
\newblock New Haven, CT: Yale University Press.

\bibitem[\protect\citeauthoryear{Skyrms}{Skyrms}{1990}]{Skyrms1990}
Skyrms, B. (1990).
\newblock The value of knowledge.
\newblock In {\em Minnesota Studies in the Philosophy of Science}, Volume~14, pp.\  245--266. Minneapolis: University of Minnesota Press.

\bibitem[\protect\citeauthoryear{Steele and Stefánsson}{Steele and Stefánsson}{2021}]{steele2021beyond}
Steele, K. and H.~O. Stefánsson (2021).
\newblock {\em Beyond Uncertainty: Reasoning with Unknown Possibilities}.
\newblock Cambridge Elements in Decision Theory and Philosophy. Cambridge, UK: Cambridge University Press.

\bibitem[\protect\citeauthoryear{Stigler}{Stigler}{1961}]{Stigler1961}
Stigler, G.~J. (1961).
\newblock The economics of information.
\newblock {\em Journal of Political Economy\/}~{\em 69\/}(3), 213--225.

\bibitem[\protect\citeauthoryear{Ullmann-Margalit}{Ullmann-Margalit}{2006}]{ullmann2006big}
Ullmann-Margalit, E. (2006).
\newblock Big decisions: Opting, converting, drifting1.
\newblock {\em Royal Institute of Philosophy Supplements\/}~{\em 58}, 157--172.

\bibitem[\protect\citeauthoryear{van Fraassen}{van Fraassen}{1984}]{vanFraassen1984}
van Fraassen, B.~C. (1984).
\newblock Belief and the will.
\newblock In {\em Midwest Studies in Philosophy IX}, pp.\  428--463.

\bibitem[\protect\citeauthoryear{von Neumann and Morgenstern}{von Neumann and Morgenstern}{1944}]{VonNeumannMorgenstern1944}
von Neumann, J. and O.~Morgenstern (1944).
\newblock {\em Theory of Games and Economic Behavior}.
\newblock Princeton University Press.

\bibitem[\protect\citeauthoryear{Walker}{Walker}{2013}]{walker2013bayesian}
Walker, S.~G. (2013).
\newblock Bayesian inference with misspecified models.
\newblock {\em Journal of Statistical Planning and Inference\/}~{\em 143\/}(10), 1621--1633.

\end{thebibliography}

\end{document}